\documentclass[lettersize,journal]{IEEEtran}
\usepackage{amsmath,amsfonts,amssymb}
\usepackage{algorithmic}
\usepackage{algorithm}
\usepackage{array}
\usepackage[caption=false,font=normalsize,labelfont=sf,textfont=sf]{subfig}
\usepackage{textcomp}
\usepackage{stfloats}
\usepackage{url}
\usepackage{verbatim}
\usepackage{graphicx}
\def\BibTeX{{\rm B\kern-.05em{\sc i\kern-.025em b}\kern-.08em
    T\kern-.1667em\lower.7ex\hbox{E}\kern-.125emX}}
\usepackage{balance}
\usepackage{cite}
\usepackage{amsthm}

\usepackage{booktabs}
\usepackage{multirow}
\usepackage{newtxtext}
\usepackage{newtxmath}
\usepackage{placeins}

\newtheorem{theorem}{Theorem}
\newtheorem{lemma}{Lemma}
\newtheorem{definition}{Definition}
\newtheorem{assumption}{Assumption}

\newtheorem{corollary}{Corollary}
\newtheorem{remark}{Remark}

\begin{document}

\title{Beyond Semantic Priors: Mitigating Optimization Collapse for Generalizable Visual Forensics}

\author{Jipeng~Liu,
Haichao~Shi,
Siyu~Xing, Rong~Yin, 
and~Xiao-Yu~Zhang,
\thanks{Jipeng Liu, Haichao Shi, Siyu Xing, and Xiao-Yu Zhang are with the Institute of Information Engineering, Chinese Academy of Sciences, Beijing 100193, China (e-mail: liujipeng@iie.ac.cn; shihaichao@iie.ac.cn; xingsiyu@iie.ac.cn; zhangxiaoyu@iie.ac.cn). Rong Yin is with the School of Cyber Science and Technology, Beihang University, Beijing 100191, China (e-mail: yinrong@buaa.edu.cn). \textit{(Corresponding author: Xiao-Yu Zhang.)}}
}

\markboth{Preprint. Under Review.}
{J. Liu \MakeLowercase{\textit{et al.}}: Mitigating Optimization Collapse for Generalizable Visual Forensics}

\maketitle

\begin{abstract}
While Vision-Language Models (VLMs) like CLIP have emerged as a dominant paradigm for generalizable deepfake detection, a fundamental representational disconnect remains: their semantic-centric pre-training is ill-suited for capturing the non-semantic artifacts inherent to hyper-realistic synthesis.
In this work, we identify a failure mode termed \textit{Optimization Collapse}, where detectors trained with Sharpness-Aware Minimization (SAM) degenerate to random guessing on non-semantic forgeries once the perturbation radius $\rho$ exceeds a narrow threshold.
To theoretically formalize this collapse, we propose the \textit{Critical Optimization Radius (COR)} to quantify the geometric stability of the optimization landscape, and leverage the Gradient Signal-to-Noise Ratio (GSNR) to measure generalization potential. 
We establish a theorem proving that COR increases monotonically with GSNR, thereby revealing that the geometric instability of SAM optimization originates from degraded intrinsic generalization potential.
This result identifies the layer-wise attenuation of GSNR as the root cause of Optimization Collapse in detecting non-semantic forgeries.
Although naively reducing $\rho$ yields stable convergence under SAM, it merely treats the symptom without mitigating the intrinsic generalization degradation, which necessitates the enhancement of gradient fidelity.
Building on this insight, we propose the \textit{Contrastive Regional Injection Transformer (CoRIT)}, which integrates a computationally efficient Contrastive Gradient Proxy (CGP) with three training-free strategies: Region Refinement Mask to suppress CGP variance, Regional Signal Injection to preserve CGP magnitude, and Hierarchical Representation Integration to attain more generalizable representations.
Extensive experiments demonstrate that CoRIT mitigates optimization collapse and achieves state-of-the-art generalization across cross-domain and universal forgery benchmarks.
\end{abstract}

\begin{IEEEkeywords}
    Deepfake Detection, Vision-Language Models, Optimization Theory, Generalization, Sharpness-Aware Minimization, Gradient Signal-to-Noise Ratio (GSNR)
\end{IEEEkeywords}

\section{Introduction}
The adversarial dynamics between neural synthesis and forensic detection have intensified into a perpetual arms race. As generative models evolve from early GANs~\cite{DBLP:conf/nips/GoodfellowPMXWOCB14, DBLP:conf/cvpr/ChoiCKH0C18} to recent diffusion Models~\cite{DBLP:conf/nips/HoJA20,DBLP:conf/icml/RameshPGGVRCS21} and sophisticated face-swapping protocols~\cite{DBLP:conf/eccv/XuZHTZTWW022}, the visual fidelity of synthesized content has improved dramatically. However, distinct generation algorithms inevitably leave unique, often imperceptible, statistical fingerprints~\cite{DBLP:journals/pami/AsnaniYHL23}. This creates a generalization bottleneck for detectors: while they excel at identifying seen forgery patterns, their performance often drops significantly when encountering unseen generative techniques.

Historically, forgery detection began with handcrafted representations, targeting blending boundary anomalies~\cite{DBLP:conf/cvpr/LiBZYCWG20,DBLP:journals/pami/NirkinWKH22,DBLP:conf/cvpr/ShioharaY22} or up-sampling artifacts~\cite{DBLP:conf/cvpr/TanLZWGLW24,DBLP:journals/corr/abs-2504-04827}. 
However, their effectiveness is bound to specific priors, failing when these priors are violated.
To overcome this limitation, researchers shifted towards data-driven deep representations, with Convolutional Neural Networks such as XceptionNet~\cite{DBLP:conf/cvpr/Chollet17, DBLP:journals/tip/ChengZZYLWL25,liu2025knowledge,DBLP:conf/iccv/0002ZFW23,DBLP:conf/eccv/QianYSCS20,DBLP:journals/pami/ZhangZZZGZZY25} and EfficientNet~\cite{DBLP:conf/icml/TanL19,DBLP:conf/cvpr/0002LLLW24} becoming the backbone of detection frameworks. 
Although these models excel at extracting high-frequency anomalies~\cite{DBLP:conf/eccv/QianYSCS20,DBLP:conf/aaai/Tan0WGLW24} and model fingerprints~\cite{DBLP:conf/cvpr/GuillaroCSDV23,DBLP:journals/pami/AsnaniYHL23}, their inherent inductive bias towards local textures leads to overfitting local-specific artifacts, resulting in limited generalization and universality.
Recently, a paradigm shift has emerged towards leveraging the universal capabilities of Vision-Language Models for forgery detection, predominantly adopting CLIP~\cite{DBLP:conf/icml/RadfordKHRGASAM21,shao2024detecting,DBLP:conf/cvpr/GuoSZLL25} as the foundational architecture.
Ojha et al.~\cite{DBLP:conf/cvpr/OjhaLL23} pioneered this direction by revealing that the frozen CLIP representation inherently encapsulates generalizable artifacts, enabling the detection of fully synthetic images via simple linear probing.
However, the efficacy of such frozen features remains limited when applied to specific facial manipulation tasks. 
To address this, subsequent research has explored Parameter-Efficient Fine-Tuning (PEFT) strategies to adapt CLIP~\cite{DBLP:conf/cvpr/CuiLLZD25,DBLP:conf/icml/0002WJZLCYDW025,shao2024detecting,DBLP:conf/cvpr/GuoSZLL25}.
For instance, Cui et al.~\cite{DBLP:conf/cvpr/CuiLLZD25} introduced lightweight adapters and auxiliary convolutional branches. 
Explicitly reintroducing convolutional priors, however, risks re-inheriting the texture-specific overfitting tendencies characteristic of traditional CNNs.
From an optimization perspective, standard Stochastic Gradient Descent (SGD) inherently tends to overfit the local textures of these non-semantic forgeries. Sharpness-Aware Minimization (SAM)~\cite{DBLP:conf/iclr/ForetKMN21} has emerged to improve detector generalization by flattening the loss landscape, as utilized by Ma et al.~\cite{DBLP:journals/corr/abs-2504-04827}.

Despite the growing consensus on utilizing CLIP visual encoders for forensics, a fundamental representational disconnect remains unaddressed. CLIP is pre-trained specifically for visual-language alignment, extracting features that prioritize semantic content. Conversely, the consensus posits that deepfake artifacts reside in the non-semantic, high-frequency domain~\cite{DBLP:conf/eccv/QianYSCS20, DBLP:conf/aaai/Tan0WGLW24}, manifesting as subtle "model fingerprints"~\cite{DBLP:conf/cvpr/GuillaroCSDV23,DBLP:journals/pami/AsnaniYHL23} hidden beneath overt semantic anomalies. This latent characteristic is paramount, as the inevitable evolution toward high-fidelity synthesis threatens to eliminate semantic discrepancies entirely. This raises a fundamental question regarding the safety and reliability of CLIP-based forensics detectors: 

\textit{Can the semantic-centric feature space of CLIP effectively encapsulate the generalizable non-semantic artifacts inherent to increasingly hyper-realistic forgery techniques?}

\begin{figure}[t]
    \centering
    \includegraphics[width=\linewidth]{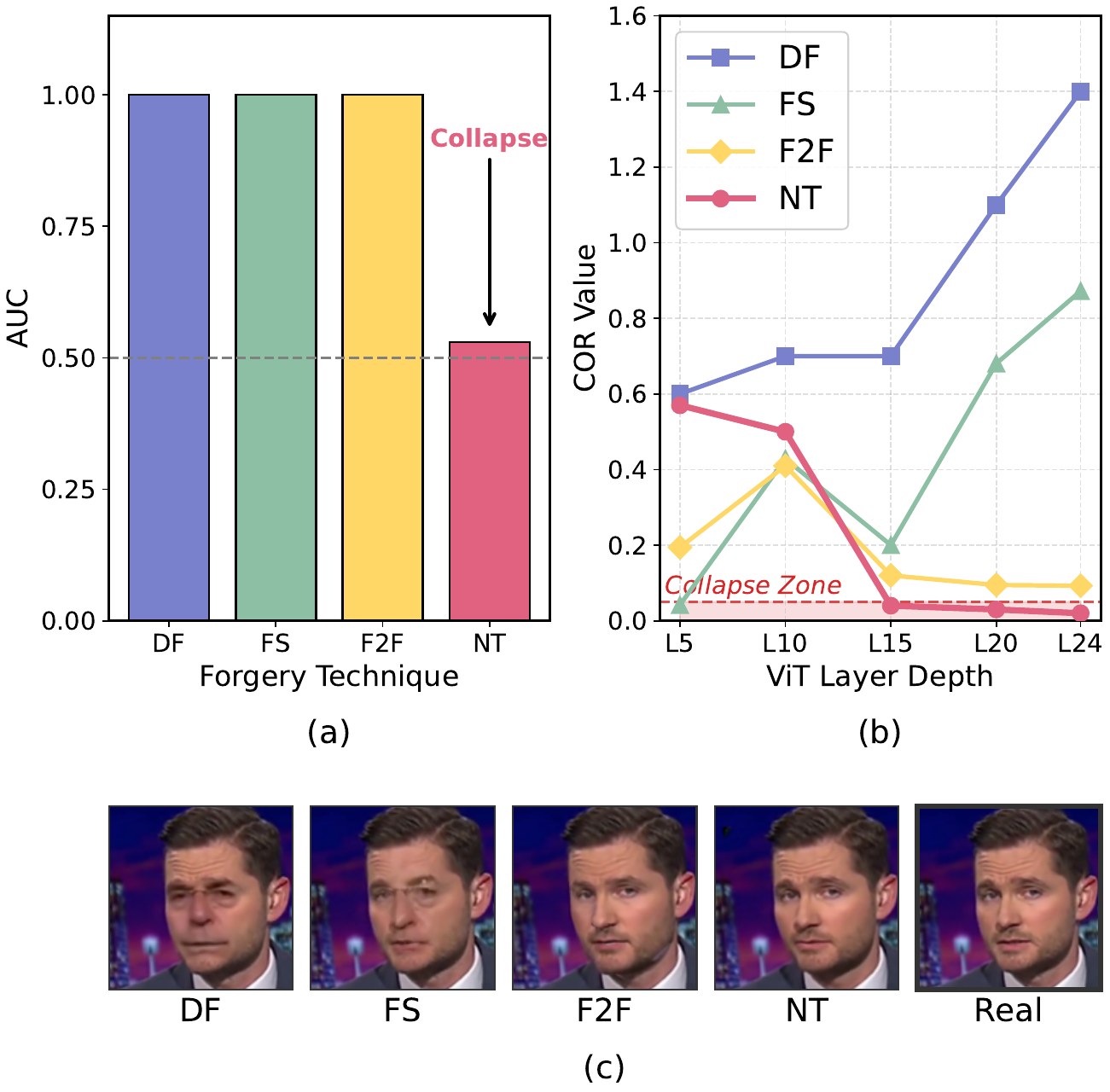}
    \vspace{-10pt} 
    \caption{Empirical observation of Optimization Collapse. 
        (a) Detection performance at the standard SAM radius ($\rho=0.05$). 
        (b) Layer-wise evolution of the Critical Optimization Radius (COR). The red region denotes the "Collapse Zone" (COR $< 0.05$). 
        (c) Representative samples from the evaluated datasets.}
    \label{fig:collapse_analysis}
\end{figure}

To investigate this concern, we re-examined diverse forgery paradigms and identified a notable optimization failure termed \textit{Optimization Collapse}. 
As illustrated in Fig.~\ref{fig:collapse_analysis}(a), training a lightweight classifier on frozen CLIP-ViT-L/14 features exposes this collapse: SAM-optimized detectors drop to random guessing on NeuralTextures (NT) once the perturbation radius $\rho$ exceeds a narrow threshold.
Under the officially recommended SAM hyperparameter ($\rho=0.05$, detailed in Section~\ref{sec:preliminary}),
the model achieved perfect convergence on the Face2Face (F2F), DeepFakes (DF), and FaceSwap (FS) datasets, yet collapsed to random guessing on NT.
Effective learning on NT was only restored when $\rho$ was reduced to 0.01.
We define this breakdown threshold as the \textit{Critical Optimization Radius (COR)}. 
Furthermore, as depicted in Fig.~\ref{fig:collapse_analysis}(b), the layer-wise evolution of COR reveals a distinct bifurcation. 
For coarse forgeries (DF, FS), the COR trajectories ascend through the layers, while for subtle manipulations (F2F, NT), they rapidly decay.
At the final transformer layer, the COR for NT drops into a "Collapse Zone" (COR $< 0.05$), providing an explanation for the failure of NT observed in Fig.~\ref{fig:collapse_analysis}(a).
Fig.~\ref{fig:collapse_analysis}(c) corroborates this finding: DF and FS exhibit overt semantic anomalies, whereas NT and F2F rely on non-semantic discrepancies that are imperceptible to the human eye.
This correspondence is expected given the semantic-centric feature space of CLIP-ViT, which systematically filters out non-semantic signals while progressively aggregating semantic features across layers~\cite{DBLP:conf/iclr/ParkK22}.

Intuition illuminates the phenomenon, yet obscures the mechanism. We formulate a rigorous theoretical framework that bridges the geometric stability of SAM and the Gradient Signal-to-Noise Ratio (GSNR)~\cite{DBLP:conf/iclr/LiuBJCW20}, a statistical metric quantifying generalizability. 
By introducing the geometric stability condition where the SAM update gradient maintains a valid descent direction, we derive an analytical expression for the COR.
Furthermore, we prove that COR increases monotonically with the GSNR at the bottleneck step. 
This theorem elucidates the essence of Optimization Collapse: it is not merely a geometric instability induced by gradient perturbations, but a direct manifestation of degraded GSNR arising from insufficient intrinsic generalizability. COR and GSNR serve as universal quantitative metrics for assessing the intrinsic generalization potential of forensic models.

This theoretically substantiates our initial concern with the empirical data in Fig.~\ref{fig:collapse_analysis}(b): \textit{the semantic-centric inductive bias of deep CLIP layers inevitably creates a generalization blind spot for non-semantic, high-fidelity deepfakes.} 
Optimization Collapse is driven by this inherent generalization degradation, which amplifies the model's vulnerability to minor gradient perturbations $\rho$.
Although naively reducing $\rho$ yields stable convergence under SAM, it merely treats the symptom without rectifying this intrinsic representational deficit, which necessitates the enhancement of gradient fidelity.

To transcend the non-semantic generalization barrier of semantic-centric models, guided by our theoretical framework, we introduce the \textbf{Co}ntrastive \textbf{R}egional \textbf{I}njection \textbf{T}ransformer (CoRIT).
For computational efficiency, we derive a Contrastive Gradient Proxy (CGP) to approximate the gradient.
Armed with this proxy, our approach incorporates three training-free strategies:
(i) A Region Refinement Mask is introduced to suppress CGP variance by identifying spatially coherent tokens with high CGP response; 
(ii) A Regional Signal Injection mechanism iteratively injects the refined high-GSNR information into additional Region Tokens to counteract the layer-wise attenuation of non-semantic signals; 
and (iii) A Hierarchical Representation Integration fuses intermediate signals rich in non-semantic forensic artifacts with deep semantic abstractions to yield generalizable multi-scale representations.
By enhancing gradient signal fidelity, this framework elevates the COR out of the collapse regime ($\text{COR} < 0.05$), thereby effectively boosting the model's cross-domain generalizability.

Our main contributions are summarized as follows:
\begin{itemize}
    \item We identify the Optimization Collapse phenomenon in CLIP-based detectors: SAM-optimized detectors drop to random guessing on high-fidelity forgeries once the perturbation radius $\rho$ exceeds a narrow threshold.  
    We theoretically formalize this by introducing the Critical Optimization Radius (COR) and the Gradient Signal-to-Noise Ratio (GSNR), which jointly expose an endogenous mismatch between generalization potential and optimization geometry.
    
    \item We establish a theorem proving that COR is monotonically increasing in GSNR, revealing that a vanishing COR is the inevitable consequence of degraded GSNR. This insight bridges the gap between geometric stability and generalizability, elucidating that Optimization Collapse is an optimization instability arising from insufficient intrinsic generalizability in semantic-centric backbones.
    
    \item Guided by our theoretical framework, we propose CoRIT, a backbone-frozen architecture designed to maximize COR, which integrates a computationally efficient Contrastive Gradient Proxy with three training-free strategic components: Region Refinement Mask, Regional Signal Injection, and Hierarchical Representation Integration. This framework enhances gradient signal fidelity to mitigate optimization collapse, actively counteracting the layer-wise attenuation of non-semantic forensic signals.
    
    \item Evaluations across comprehensive cross-domain and universal forgery benchmarks show that CoRIT outperforms existing state-of-the-art approaches. These results validate our theory-driven design in transcending the generalization barrier for high-fidelity deepfake detection.
\end{itemize}
\section{Related Work}

\subsection{Forensic Representations}
Effective and generalizable forgery representations are the foundation of deepfake detection. The evolution of forensic representations can be categorized into three distinct paradigms: Handcrafted Forensics, Deep Forensics, and the emerging Universal Forensics enabled by vision-language models.

\subsubsection{Handcrafted Forensics}
Early deepfake detection methods relied on handcrafted representations derived from specific physical priors, such as biological photoplethysmography (PPG) signals~\cite{ciftci2020fakecatcher} or the disentanglement of 3D shape, lighting, and texture~\cite{DBLP:journals/pami/ZhuFZZZLL23}. Targeting blending artifacts, a prominent line of research exploited the discrepancies between the manipulated facial foreground and the background context, either by detecting blending boundaries~\cite{DBLP:conf/cvpr/LiBZYCWG20} or by analyzing face-context identity inconsistencies~\cite{DBLP:journals/pami/NirkinWKH22}. Within this scope, Shiohara et al.~\cite{DBLP:conf/cvpr/ShioharaY22} introduced Self-Blended Images (SBI) to synthesize pseudo-forgeries entirely from real images, guiding models to learn generalizable blending boundaries.
Recently, handcrafted representations have evolved to capture structural artifacts in advanced full-image synthesis models, utilizing gradients~\cite{tan2023learning}, neighboring pixel relationships~\cite{DBLP:conf/cvpr/TanLZWGLW24} to expose up-sampling anomalies, or integrating structural and blending anomalies~\cite{DBLP:journals/corr/abs-2504-04827}.
Despite their effectiveness on specific anomalies, the strict reliance on predefined priors restricts their generalization when confronting unseen manipulations, necessitating the transition to data-driven learning paradigms.

\subsubsection{Deep Forensics}
Deep representation learning has subsequently become the mainstream paradigm, utilizing fully fine-tuned (FFT) backbones to extract data-driven forgery features. 
Representative architectures include XceptionNet~\cite{DBLP:conf/cvpr/Chollet17, DBLP:journals/tip/ChengZZYLWL25,liu2025knowledge,DBLP:conf/iccv/0002ZFW23,DBLP:conf/eccv/QianYSCS20,DBLP:journals/pami/ZhangZZZGZZY25}, recognized as a strong baseline on FaceForensics++~\cite{DBLP:conf/iccv/RosslerCVRTN19}, alongside EfficientNet~\cite{DBLP:conf/icml/TanL19,DBLP:conf/cvpr/0002LLLW24} and ResNet~\cite{DBLP:conf/cvpr/HeZRS16,DBLP:journals/pami/QuLLWC26,DBLP:journals/pami/ZhangZZZGZZY25,DBLP:conf/cvpr/KashianiTA25}.
Since forgery artifacts often manifest in the high-frequency domain, researchers have incorporated frequency information through learnable filters~\cite{DBLP:conf/eccv/QianYSCS20} or adaptive transforms~\cite{DBLP:conf/aaai/Tan0WGLW24} to expose spectral anomalies.
Other approaches exploit the unique fingerprints inherent to generative models. Guillaro et al.~\cite{DBLP:conf/cvpr/GuillaroCSDV23} automatically extract these traces for detection, while Asnani et al.~\cite{DBLP:journals/pami/AsnaniYHL23} leverage them to reverse engineer the source generator for trustworthy attribution.
Despite these advancements, relying solely on standard visual backbones introduces an inductive bias towards local textures, causing models to overfit dataset-specific artifacts. 
To enhance generalization, researchers have incorporated advanced strategies like disentangling common features~\cite{DBLP:conf/iccv/0002ZFW23} and latent space augmentation~\cite{DBLP:conf/cvpr/0002LLLW24} to transcend forgery specificity. 
Qiao et al.~\cite{DBLP:journals/pami/QiaoXCRL24} explored unsupervised learning paradigms to eliminate the dependency on annotated forgery labels. 
Identity analysis also proves effective, detecting manipulations by retrieving source identities~\cite{DBLP:journals/pami/ZhangZZZGZZY25} or verifying identity consistency~\cite{DBLP:journals/pami/QuLLWC26}.
Parallel efforts explicitly mitigate domain-specific biases through data debiasing~\cite{DBLP:journals/tip/ChengZZYLWL25}, frequency debiasing~\cite{DBLP:conf/cvpr/KashianiTA25}, spatial debiasing~\cite{DBLP:conf/aaai/FuYYCL25}, and automated debiasing~\cite{liu2025knowledge}.
The confinement of task-specific supervision to closed-set distributions necessitates a shift towards foundation models that inherently offer universal representations.

\subsubsection{Universal Forensic}
In the era of Vision-Language Models (VLMs), Ojha et al.~\cite{DBLP:conf/cvpr/OjhaLL23} revealed that the universal representations of foundation models inherently capture generalizable forgery traces, demonstrating that linear probing on frozen CLIP features enables universal detection of entire synthetic images.
But its efficacy on facial manipulation remains limited. 
To bridge this gap, recent CLIP adaptation strategies employ Parameter-Efficient Fine-Tuning (PEFT) paradigms, utilizing forensics-oriented adapters~\cite{DBLP:conf/cvpr/CuiLLZD25} or advanced LoRA schemes leveraging orthogonal subspace decomposition~\cite{DBLP:conf/icml/0002WJZLCYDW025}.
Emerging research leverages CLIP as a foundational encoder for multi-modal forgery detection~\cite{shao2024detecting,DBLP:conf/cvpr/GuoSZLL25}.
With CLIP established as the representative backbone of this paradigm shift, however, a profound theoretical conflict is overlooked: while CLIP is pre-trained for semantic alignment, widely recognized forensic clues, such as up-sampling artifacts, high-frequency anomalies, and model fingerprints, are inherently non-semantic.
This raises a fundamental question: Can semantic-centric backbones effectively characterize non-semantic forensic traces?
Addressing this, we provide a theoretical analysis and design a method to explicitly enhance non-semantic forensic representations.

\subsection{Sharpness-Aware Minimization}
Sharpness-Aware Minimization (SAM)~\cite{DBLP:conf/iclr/ForetKMN21} improves generalization by minimizing the worst-case loss within a Euclidean neighborhood. Expanding on this geometric foundation, Fisher SAM~\cite{DBLP:conf/icml/Kim0HH22} models the parameter space as inherently non-Euclidean, proposing an ellipsoid neighborhood aligned with the Fisher Information Matrix. Tilted SAM~\cite{DBLP:conf/icml/00050B25} addresses the non-smoothness of the minimax objective via exponential tilting to explicitly favor flatness.
Beyond static geometric analysis, recent studies have explored dynamic optimization strategies to enhance SAM's efficiency and effectiveness. Sharpness-Aware Lookahead (SALA)~\cite{DBLP:journals/pami/TanZLG24} combines the fast convergence of Lookahead with the generalization benefits of SAM by seeking flat minima in the later stages of training. 
To mitigate computational overhead, Sparse SAM (SSAM)~\cite{DBLP:journals/pami/MiSRZXSLJT25} applies sparse perturbations, establishing that indiscriminate perturbation remains suboptimal.

While these variants refine the framework via static geometric analysis or dynamic optimization strategies, Unified SAM (USAM)~\cite{DBLP:conf/iclr/OikonomouL25} unifies update rules under a relaxed "Expected Residual" condition to model dynamic convergence. 
USAM relies on global relaxed scaling that provides uniform bounds; however, it overlooks local landscape variations, making it insufficient to explain the Optimization Collapse phenomenon we observe.
We bypass these global limitations by employing \textit{Local Lipschitz Smoothness} to achieve a fine-grained characterization of geometric stability in SAM optimization. Furthermore, we establish a theoretical connection between the geometric Critical Optimization Radius (COR) and the Gradient Signal-to-Noise Ratio (GSNR) to quantify generalizability.
\begin{figure*}[t]
    \centering
    \includegraphics[width=0.99\textwidth]{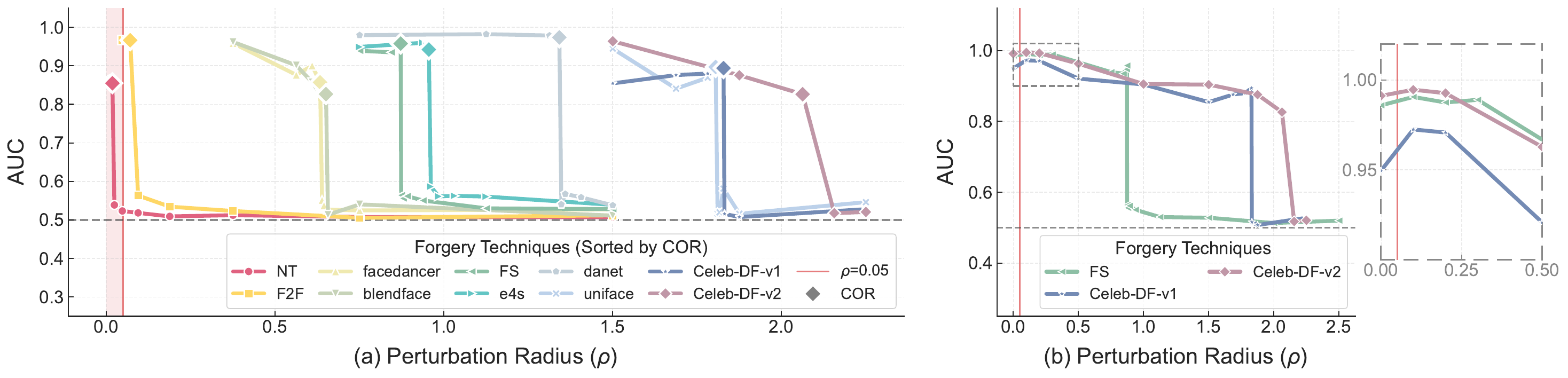}
    
    \caption{AUC curves of models trained with Sharpness-Aware Minimization (SAM) across varying perturbation radii $\rho$. (a) Empirical analysis of Optimization Collapse. The Critical Optimization Radius (COR) (diamond markers) across various datasets. The red region highlights optimization failure at the standard radius ($\rho=0.05$). (b) AUC trends across the entire span of $\rho$. The gray dashed box indicates the localized region magnified on the right, highlighting AUC variations around the recommended radius ($\rho=0.05$).}
    \label{fig:oc_visual} 
\end{figure*}

\section{Theoretical Analysis of Optimization Collapse: From Geometric Stability to Generalization}
\label{sec:optimization_collapse}

The empirical bifurcation observed in Fig.~\ref{fig:collapse_analysis} exposes an optimization failure phenomenon termed Optimization Collapse. 
To quantify this phenomenon, we first propose the Critical Optimization Radius (COR) and derive its analytical expression by imposing the condition of critical geometric stability. Subsequently, we establish a theoretical link between COR and the Gradient Signal-to-Noise Ratio (GSNR)~\cite{DBLP:conf/iclr/LiuBJCW20}, a generalization metric quantifying the quality of the gradient signal. This connection demonstrates that a vanishing COR stems directly from a degraded GSNR, identifying insufficient generalization as the root cause of the collapse.
This theoretical insight guides the design of our proposed framework.

\subsection{The Phenomenon of Optimization Collapse}
\label{sec:phenomenon_collapse}

Optimization Collapse occurs when the classifier's predictive capability drops to random guessing once the SAM perturbation radius $\rho$ exceeds a narrow threshold. 
To investigate this, we systematically profiled AUC trajectories using linear probing on the frozen CLIP-ViT-L/14 backbone across ten representative face forgery paradigms. 
As Fig.~\ref{fig:oc_visual}(a) illustrates, a consistent pattern emerges: when $\rho$ exceeds its respective threshold, the AUC falls to approximately 0.5. 
We define this boundary as the Critical Optimization Radius (COR). 
A sufficiently large COR ensures stable convergence under SAM; however, a vanishing COR, as observed in NT and F2F, invalidates the SAM update.

Can simply reducing $\rho$ resolve this collapse? We emphasize that SAM is employed here not as an indispensable optimizer, but as a controlled diagnostic probe exposing the intrinsic representational limitations of the backbone. As illustrated in Fig.~\ref{fig:oc_visual}(b), increasing $\rho$ from 0 to 0.1 yields consistent generalization gains, confirming the well-established benefits of landscape flattening. When $\rho = 0$, SAM degenerates to standard SGD, which converges stably but completely forfeits these generalization advantages. Naively reducing $\rho$ circumvents the geometric instability yet simultaneously abandons the generalization potential, leaving the underlying representational deficit entirely unresolved. Our subsequent theoretical analysis formalizes this tension by linking the stability boundary (COR) to gradient signal quality (GSNR), ultimately reframing Optimization Collapse from a mere optimization failure into a profound representational diagnosis of intrinsic generalization capability.

\subsection{Preliminaries and Definitions}
\label{sec:preliminary}

We adopt standard notation to characterize the optimization dynamics. Let $\mathbf{w}_t$ denote the model parameters at step $t$ approaching a local minimum $\mathbf{w}^*$. For the global objective function $\mathcal{L}(\mathbf{w})$, its value, gradient, and Hessian matrix at step $t$ are denoted as $\mathcal{L}_t \triangleq \mathcal{L}(\mathbf{w}_t)$, $\nabla \mathcal{L}_t \triangleq \nabla \mathcal{L}(\mathbf{w}_t)$, and $\mathbf{H}_t \triangleq \nabla^2 \mathcal{L}(\mathbf{w}_t)$, respectively. Let $\mathcal{B}_t$ denote the mini-batch sampled at step $t$. The stochastic gradient is defined as $\mathbf{g}_t \triangleq \nabla \mathcal{L}_{\mathcal{B}_t}(\mathbf{w}_t)$, which serves as an unbiased estimator of the global gradient, i.e., $\mathbb{E}_{\mathcal{B}_t}[\mathbf{g}_t] = \nabla \mathcal{L}_t$. Let $\tilde{\mathbf{g}}_t$ represent the SAM-perturbed gradient.
To analyze the stability limits, we define $\rho$ as the perturbation radius, $\rho_{\text{critical}}$ as the Critical Optimization Radius (COR), and $t^*$ as the Bottleneck Step. We quantify the local geometry via the local smoothness constant $L_t$ and the trace-based stable rank of the Hessian $\kappa_s(\mathbf{H}_t) \triangleq \frac{\operatorname{Tr}(\mathbf{H}_t)}{\lambda_{\max}(\mathbf{H}_t)}$, which measures the effective dimensionality of the Hessian's eigenspectrum. Statistical properties are measured by the gradient variance $\sigma^2$ and Gradient Signal-to-Noise Ratio ($\text{GSNR}_t$). 
$\mathbb{E}[\cdot]$, $\text{Var}[\cdot]$, and $\operatorname{Cov}[\cdot]$ denote the expectation, variance, and covariance operators; $\operatorname{Tr}(\cdot)$ and $\lambda_{\max}(\cdot)$ denote the trace and the maximum eigenvalue of a square matrix, respectively; and $\|\cdot\|$ represents the Euclidean norm.

\subsubsection{Sharpness-Aware Minimization (SAM)} 
Standard SAM seeks parameters lying within a flat loss landscape to enhance generalization by solving the following minimax objective~\cite{DBLP:conf/iclr/ForetKMN21}:
\begin{equation}
    \min_{\mathbf{w}} \max_{\|\boldsymbol{\epsilon}\|_2 \le \rho} \mathcal{L}(\mathbf{w} + \boldsymbol{\epsilon}),
    \label{eq:sam_definition}
\end{equation}
where $\rho \ge 0$ represents a pre-defined perturbation radius.
Specifically, at step $t$, SAM computes the perturbed gradient $\tilde{\mathbf{g}}_t = \nabla \mathcal{L}(\mathbf{w}_t + \boldsymbol{\epsilon}_t)$ via an optimal adversarial perturbation vector $\boldsymbol{\epsilon}_t$. 
As derived by Foret et al.~\cite{DBLP:conf/iclr/ForetKMN21} via a first-order Taylor approximation, this optimal adversarial perturbation aligns with the normalized gradient direction: 
\begin{equation}
    \boldsymbol{\epsilon}_t \approx \rho \mathbf{g}_t / \|\mathbf{g}_t\|_2.
    \label{eq:sam_norm}
\end{equation} 
By penalizing sharp minima, as illustrated in Fig.~\ref{fig:oc_visual}(b), SAM yields consistent generalization gains at a recommended radius ($\rho \approx 0.05$). 
As $\rho$ further increases, the generalization performance gradually degrades, until exceeding the COR triggers Optimization Collapse.

\subsubsection{Geometric Stability Condition}
\label{subsubsec:geometric_stable_condition}
We observe that SAM obtains the perturbed gradient $\tilde{\mathbf{g}}_t$ by injecting an adversarial perturbation $\boldsymbol{\epsilon}_t$ upon the stochastic gradient $\mathbf{g}_t$.
For effective optimization, the intensity of this adversarial perturbation must not overwhelm the original gradient signal. 
We quantify this impact via the expected inner product between the gradient $\nabla \mathcal{L}_t$ and the perturbed update $\tilde{\mathbf{g}}_t$. 
The inner product captures both directional similarity and magnitude similarity, where an excessive perturbation radius $\rho$ renders the descent ineffective by disrupting this similarity.
To guarantee stable descent, we formulate a geometric stability constraint: the perturbed update $\tilde{\mathbf{g}}_t$ must, in expectation, maintain a positive projection onto the true descent direction $\nabla \mathcal{L}_t$:
\begin{equation}
\mathbb{E}[\langle \nabla \mathcal{L}_t, \tilde{\mathbf{g}}_t \rangle] > 0.
\label{eq:stability_condition}
\end{equation}
This inequality serves as the theoretical foundation for deriving the COR.

\subsubsection{Gradient Signal-to-Noise Ratio (GSNR)} 
\label{sub:gsnr_definition}
Introduced by Liu et al.~\cite{DBLP:conf/iclr/LiuBJCW20} to quantify generalization capability, GSNR measures the strength of the true gradient signal relative to the stochastic gradient variance.
To align with the scalar perturbation radius $\rho$, we define the global GSNR at step $t$ as:
\begin{equation}
    \label{eq:gsnr_definition}
    \text{GSNR}_t \triangleq \frac{\|\nabla \mathcal{L}_t\|^2}{\operatorname{Tr}(\operatorname{Cov}[\mathbf{g}_t])}.
\end{equation}
A lower GSNR indicates a degraded gradient signal, which is theoretically linked to poor generalization.

\subsubsection{Hessian-Covariance Decomposition}
\label{sec:hessian_identity}
Second-order optimization theory~\cite{DBLP:journals/jmlr/Martens20} establishes a correspondence between the Hessian's geometric curvature and the gradient covariance's statistical properties.
Under the negative log-likelihood (NLL) loss, let $\mathbf{x} \sim q(\mathbf{x})$ denote the input data and $p_{\mathbf{w}_t}(\mathbf{x})$ be the parameterized model distribution. 
The stochastic gradient is formulated as $\mathbf{g}_t = -\nabla_{\mathbf{w}_t} \log p_{\mathbf{w}_t}(\mathbf{x})$. 
The expected Hessian matrix $\mathbf{H}_t$ decomposes as follows:
\begin{equation}
    \mathbf{H}_t = \operatorname{Cov}[\mathbf{g}_t] + \nabla \mathcal{L}_t \nabla \mathcal{L}_t^\top - \mathbf{\Xi}_t,
    \label{eq:hessian_identity}
\end{equation}
where $\operatorname{Cov}[\mathbf{g}_t]$ is the gradient covariance matrix, and $\mathbf{\Xi}_t \triangleq \mathbb{E}_{\mathbf{x} \sim q(\mathbf{x})} \left[ \frac{\nabla_{\mathbf{w}_t}^2 p_{\mathbf{w}_t}(\mathbf{x})}{p_{\mathbf{w}_t}(\mathbf{x})} \right]$ is the residual term for model misspecification. The details are provided in the supplementary.

\subsection{Assumptions}
\label{sec:theoretical_setup}

\subsubsection{Optimization Setting}
We focus our theoretical analysis on the optimization process trained via mini-batch SGD with Cross-Entropy (CE) loss. For classification task, CE loss coincides with the Negative Log-Likelihood (NLL), which validates the Hessian-Covariance Decomposition in Section~\ref{sec:hessian_identity}.

\begin{assumption}[Local Curvature \& Mini-batch Smoothness]
\label{assum:local_curvature}
Assume the loss function $\mathcal{L}(\mathbf{w})$ is twice continuously differentiable within the $\rho$-neighborhood $\mathcal{N}(\mathbf{w}_t, \rho) \triangleq \{ \mathbf{w} \mid \|\mathbf{w} - \mathbf{w}_t\|_2 \le \rho \}$. We define the local smoothness constant $L_t$ via the supremum of the Hessian's spectral norm, which approximates its maximum eigenvalue for small $\rho$:
\begin{equation}
    L_t \triangleq \sup_{\mathbf{u} \in \mathcal{N}(\mathbf{w}_t, \rho)} \|\nabla^2 \mathcal{L}(\mathbf{u})\|_2 \approx \lambda_{\max}(\mathbf{H}_t).
    \label{eq:local_curvature}
\end{equation}
We assume this $L_t$-Lipschitz smoothness holds uniformly across mini-batches. For any perturbation $\|\boldsymbol{\epsilon}\| \le \rho$:
\begin{equation}
    \|\tilde{\mathbf{g}}_t - \mathbf{g}_t\| = \|\nabla \mathcal{L}_{\mathcal{B}_t}(\mathbf{w}_t + \boldsymbol{\epsilon}) - \nabla \mathcal{L}_{\mathcal{B}_t}(\mathbf{w}_t)\| \le L_t \|\boldsymbol{\epsilon}\|.
    \label{eq:local_smoothness}
\end{equation}

\noindent\textbf{Justification.} The approximation in \eqref{eq:local_curvature} and the mini-batch smoothness assumption are standard in SAM theoretical analysis~\cite{DBLP:conf/iclr/ForetKMN21, DBLP:journals/jmlr/BartlettLB23, DBLP:journals/jmlr/LongB24}. 
Although the bound may become loose for large perturbation radii, it remains tight in the \emph{Optimization Collapse} regime where the critical radius is vanishingly small ($\mathrm{COR} < 0.05$). In such narrow neighborhoods, the maximum eigenvalue of the Hessian exhibits negligible variation, ensuring the approximation's validity.
\end{assumption}

\subsection{Derivation of the Critical Optimization Radius}
\label{sec:derivation_cor}

To analytically formulate the COR, we translate the geometric stability condition (Section~\ref{subsubsec:geometric_stable_condition}) into a quantifiable boundary. This yields a step-wise upper bound for the perturbation radius $\rho$ to guarantee descent.

\begin{lemma}[Local Stability Bound]
\label{lem:local_stability}
Under local $L_t$-Lipschitz smoothness (Assumption~\ref{assum:local_curvature}), the geometric stability condition $\mathbb{E}[\langle \nabla \mathcal{L}_t, \tilde{\mathbf{g}}_t \rangle] > 0$ holds if:
\begin{equation}
    \rho < \frac{\|\nabla \mathcal{L}_t\|}{L_t}.
    \label{eq:local_stability}
\end{equation}
\end{lemma}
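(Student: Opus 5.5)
The plan is to split the perturbed stochastic gradient into the unperturbed stochastic gradient plus a perturbation-induced deviation, $\tilde{\mathbf{g}}_t = \mathbf{g}_t + (\tilde{\mathbf{g}}_t - \mathbf{g}_t)$. By linearity of the inner product and of expectation this gives
\begin{equation*}
\mathbb{E}[\langle \nabla \mathcal{L}_t, \tilde{\mathbf{g}}_t \rangle] = \langle \nabla \mathcal{L}_t, \mathbb{E}[\mathbf{g}_t] \rangle + \mathbb{E}[\langle \nabla \mathcal{L}_t, \tilde{\mathbf{g}}_t - \mathbf{g}_t \rangle].
\end{equation*}
Here the expectation is over the mini-batch $\mathcal{B}_t$ with $\mathbf{w}_t$ fixed. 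Since $\mathbf{g}_t$ is unbiased (Section~\ref{sec:preliminary}), the first term is exactly $\|\nabla \mathcal{L}_t\|^2$. This term is the ``signal'' that the perturbation must not overwhelm.

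Next I would lower-bound the second term by Cauchy--Schwarz: $\langle \nabla \mathcal{L}_t, \tilde{\mathbf{g}}_t - \mathbf{g}_t \rangle \ge -\|\nabla \mathcal{L}_t\|\,\|\tilde{\mathbf{g}}_t - \mathbf{g}_t\|$. Then I would apply the mini-batch smoothness bound \eqref{eq:local_smoothness} from Assumption~\ref{assum:local_curvature}. The SAM perturbation \eqref{eq:sam_norm} has norm exactly $\rho$, so it lies in $\mathcal{N}(\mathbf{w}_t,\rho)$ and $\|\tilde{\mathbf{g}}_t - \mathbf{g}_t\| \le L_t \|\boldsymbol{\epsilon}_t\| = L_t\rho$, deterministically for every mini-batch. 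Taking expectations preserves the bound, so
\begin{equation*}
\mathbb{E}[\langle \nabla \mathcal{L}_t, \tilde{\mathbf{g}}_t \rangle] \ge \|\nabla \mathcal{L}_t\|^2 - L_t \rho \|\nabla \mathcal{L}_t\| = \|\nabla \mathcal{L}_t\|\bigl(\|\nabla \mathcal{L}_t\| - L_t\rho\bigr).
\end{equation*}
When $\nabla \mathcal{L}_t \neq 0$, the right-hand side is strictly positive exactly when $\rho < \|\nabla \mathcal{L}_t\|/L_t$, which gives \eqref{eq:local_stability}.

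The main obstacle I expect is the bookkeeping around which function the smoothness is applied to. Two points need care. First, $\boldsymbol{\epsilon}_t$ depends on the same mini-batch $\mathcal{B}_t$ as $\mathbf{g}_t$. The uniform-over-mini-batches clause of Assumption~\ref{assum:local_curvature} is what makes the bound valid pointwise, before taking the expectation. If that clause were weaker, I would fall back on bounding $\mathbb{E}\|\tilde{\mathbf{g}}_t-\mathbf{g}_t\|$ instead. Second, there are degenerate cases. If $\nabla\mathcal{L}_t=0$ the condition is vacuous, since the bound would require $\rho<0$, so it can be excluded. If $L_t=0$ the ratio is read as $+\infty$ and the conclusion holds trivially. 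I would remark on both in one sentence each. The result is a sufficient condition only, which is all the lemma claims, so tightness need not be addressed here.
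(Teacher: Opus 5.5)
Your proposal is correct and matches the paper's proof step for step: unbiasedness gives $\|\nabla \mathcal{L}_t\|^2$, Cauchy--Schwarz bounds the deviation term, mini-batch smoothness with $\|\boldsymbol{\epsilon}_t\| = \rho$ controls it, and rearranging yields the bound. Your remarks on the mini-batch dependence of $\boldsymbol{\epsilon}_t$ and on the degenerate cases $\nabla \mathcal{L}_t = 0$ and $L_t = 0$ are refinements the paper omits, but they do not change the argument.
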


\begin{proof}
Decomposing the expected inner product in \eqref{eq:stability_condition} yields:

\begin{equation}
\begin{aligned}
    \mathbb{E}[\langle \nabla \mathcal{L}_t, \tilde{\mathbf{g}}_t \rangle] 
    &= \mathbb{E}[\langle \nabla \mathcal{L}_t, \mathbf{g}_t \rangle] + \mathbb{E}[\langle \nabla \mathcal{L}_t, \tilde{\mathbf{g}}_t - \mathbf{g}_t \rangle] \\ 
    &= \|\nabla \mathcal{L}_t\|^2 + \mathbb{E}[\langle \nabla \mathcal{L}_t, \tilde{\mathbf{g}}_t - \mathbf{g}_t \rangle] \quad && \text{\scriptsize{(Unbiasedness)}}\\
    &\ge \|\nabla \mathcal{L}_t\|^2 - \mathbb{E}[\|\nabla \mathcal{L}_t\| \|\tilde{\mathbf{g}}_t - \mathbf{g}_t\|] \quad && \text{\scriptsize{(Cauchy-Schwarz)}} \\
    &\stackrel{\eqref{eq:local_smoothness}}{\ge} \|\nabla \mathcal{L}_t\|^2 - \mathbb{E}[\|\nabla \mathcal{L}_t\| (L_t \|\boldsymbol{\epsilon}_t\|)] \\
    &\stackrel{\eqref{eq:sam_norm}}{\ge} \|\nabla \mathcal{L}_t\|^2 - L_t \rho \|\nabla \mathcal{L}_t\|.
\end{aligned}
\end{equation}
For \eqref{eq:stability_condition} to hold, we require $\|\nabla \mathcal{L}_t\|^2 - L_t \rho \|\nabla \mathcal{L}_t\| > 0$. Rearranging this inequality yields:

\begin{equation}
    \rho < \frac{\|\nabla \mathcal{L}_t\|}{L_t}.
\end{equation}

\end{proof}

Lemma~\ref{lem:local_stability} provides a step-wise stability bound. However, the Critical Optimization Radius (COR) from Section~\ref{sec:phenomenon_collapse} acts as a global threshold throughout the optimization process, remaining independent of step $t$. 
The COR is dictated by the most restrictive local stability bound, attained at what we identify as the bottleneck step $t^*$.

\begin{definition}[Critical Optimization Radius]
\label{def:cor}
We define the Critical Optimization Radius (COR) $\rho_{\text{critical}}$ as the minimum local stability bound over the optimization trajectory. Applying the curvature approximation $L_t \approx \lambda_{\max}(\mathbf{H}_t)$ from Assumption~\ref{assum:local_curvature}, the global COR is formulated as:
\begin{equation}
    \label{eq:cor_global_def}
    \rho_{\text{critical}} \triangleq \min_{t} \frac{\|\nabla \mathcal{L}_t\|}{\lambda_{\max}(\mathbf{H}_t)}.
\end{equation}

We define the \textit{bottleneck step} $t^*$ as the iteration achieving this minimum, where the local curvature restricts the perturbation radius the most:
\begin{equation}
    \label{eq:bottleneck_def}
    t^* \triangleq \underset{t}{\operatorname{argmin}} \frac{\|\nabla \mathcal{L}_t\|}{\lambda_{\max}(\mathbf{H}_t)}.
\end{equation}
\end{definition}

Equivalently, evaluating at the bottleneck step $t^*$ yields $\rho_{\text{critical}} = {\|\nabla \mathcal{L}_{t^*}\|}/{\lambda_{\max}(\mathbf{H}_{t^*})}$. 


\subsection{Bridging COR and Generalization via GSNR}
\label{sec:cor_gsnr_bridge}

While COR is derived from geometric constraints, we establish a formal connection between this stability threshold and the Gradient Signal-to-Noise Ratio (GSNR). Leveraging the Hessian-Covariance decomposition in \eqref{eq:hessian_identity}, we demonstrate that SAM's geometric stability couples directly with the statistical quality of the gradient.

\begin{theorem}[COR–GSNR Stability Decomposition]
\label{them:gsnr_bound}
The Critical Optimization Radius $\rho_{\text{critical}}$ at the bottleneck step $t^*$ is analytically factorized into a tripartite decomposition of local geometric flatness, model misspecification, and the GSNR.
Introducing the trace-based stable rank of the Hessian, $\kappa_s(\mathbf{H}_t) \triangleq \operatorname{Tr}(\mathbf{H}_t) / \lambda_{\max}(\mathbf{H}_t)$, we formulate the COR as:
\begin{equation}
    \label{eq:refined_bound}
    \rho_{\text{critical}} = \underbrace{\frac{\kappa_s(\mathbf{H}_{t^*})}{\sqrt{\operatorname{Tr}(\mathbf{H}_{t^*})}}}_{\text{Geometric Term}} \cdot \underbrace{\sqrt{1 + \frac{\operatorname{Tr}(\mathbf{\Xi}_{t^*})}{\operatorname{Tr}(\mathbf{H}_{t^*})}}}_{\text{Misspecification Term}} \cdot \underbrace{\sqrt{\frac{\text{GSNR}_{t^*}}{1 + \text{GSNR}_{t^*}}}}_{\text{Statistical Term}}
\end{equation}
where $\mathbf{\Xi}_{t^*}$ represents the residual matrix capturing model misspecification.
\end{theorem}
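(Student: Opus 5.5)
We start from Definition~\ref{def:cor}, which gives $\rho_{\text{critical}} = \|\nabla \mathcal{L}_{t^*}\| / \lambda_{\max}(\mathbf{H}_{t^*})$. The goal is to rewrite the two ingredients, $\|\nabla \mathcal{L}_{t^*}\|$ and $\lambda_{\max}(\mathbf{H}_{t^*})$, in terms of traces, using the Hessian-Covariance Decomposition \eqref{eq:hessian_identity}. We drop the subscript $t^*$ and set $G \triangleq \|\nabla \mathcal{L}\|^2$, $C \triangleq \operatorname{Tr}(\operatorname{Cov}[\mathbf{g}])$, $T \triangleq \operatorname{Tr}(\mathbf{H})$ and $X \triangleq \operatorname{Tr}(\mathbf{\Xi})$.

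\emph{Step 1: trace of the decomposition.} Taking the trace of \eqref{eq:hessian_identity} and using $\operatorname{Tr}(\nabla\mathcal{L}\nabla\mathcal{L}^\top)=G$ gives
\begin{equation*}
T = C + G - X, \qquad\text{equivalently}\qquad C + G = T + X = T\left(1 + \frac{X}{T}\right).
\end{equation*}

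\emph{Step 2: extract the gradient norm via GSNR.} By definition \eqref{eq:gsnr_definition}, $C = G/\text{GSNR}$. Hence
\begin{equation*}
C + G = G\,\frac{1+\text{GSNR}}{\text{GSNR}},
\end{equation*}
and so $G = (T+X)\,\text{GSNR}/(1+\text{GSNR})$. Taking square roots,
\begin{equation*}
\|\nabla\mathcal{L}\| = \sqrt{T}\,\sqrt{1+X/T}\,\sqrt{\text{GSNR}/(1+\text{GSNR})}.
\end{equation*}
This identity is exact and needs no inequality.

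\emph{Step 3: handle the curvature denominator.} By the stable rank definition, $\lambda_{\max}(\mathbf{H}) = T/\kappa_s(\mathbf{H})$. Dividing Step 2 by this gives the geometric factor
\begin{equation*}
\frac{\sqrt{T}\,\kappa_s}{T} = \frac{\kappa_s}{\sqrt{T}},
\end{equation*}
which, multiplied by the misspecification and statistical factors, reproduces \eqref{eq:refined_bound}.

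\emph{Step 4: monotonicity claim.} Finally, $x\mapsto \sqrt{x/(1+x)}$ is increasing on $x\ge 0$, which yields the stated monotonicity in GSNR with the other two factors held fixed.

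\emph{Main obstacle: well-definedness.} The algebra itself is routine; the care goes into checking that every manipulation is legitimate at $t^*$:
\begin{itemize}
\item $T>0$ and $\lambda_{\max}(\mathbf{H})>0$, so that $\kappa_s$ and $\sqrt{T}$ make sense. This holds near a local minimum, where $\mathbf{H}$ is positive semidefinite and nonzero.
\item $1+X/T\ge 0$, which follows from $T+X = C+G \ge 0$.
\item $\text{GSNR}>0$, i.e.\ $C>0$, so that the substitution $C=G/\text{GSNR}$ is valid. If $G=0$ both sides vanish trivially.
\end{itemize}
Two further points need explicit comment. First, Step 1 uses the decomposition \eqref{eq:hessian_identity} with the expected (population) Hessian and the per-sample gradient covariance. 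This is valid under the CE/NLL setting stated in the assumptions, but it is the one place where modeling is imported rather than derived. Second, $\lambda_{\max}(\mathbf{H}_{t^*})$ enters only through Definition~\ref{def:cor}, which already adopted the approximation $L_t\approx\lambda_{\max}(\mathbf{H}_t)$. The theorem is therefore an exact factorization of the defined COR, not of the true stability threshold.
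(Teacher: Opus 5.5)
Your proof is correct and follows essentially the same route as the paper. Both take the trace of the Hessian--covariance identity, use the GSNR definition to trade $\|\nabla\mathcal{L}_{t^*}\|^2$ against $\operatorname{Tr}(\operatorname{Cov}[\mathbf{g}_{t^*}])$, and substitute $\lambda_{\max}(\mathbf{H}_{t^*}) = \operatorname{Tr}(\mathbf{H}_{t^*})/\kappa_s(\mathbf{H}_{t^*})$; you solve for the gradient norm directly where the paper first isolates the covariance trace, which is the same algebra. Your well-definedness checks mirror the paper's supplementary well-posedness lemma, with two small wording fixes: $\operatorname{Tr}(\mathbf{H}_{t^*})>0$ is better stated as a hypothesis (as the paper does), since $t^*$ lies in early optimization rather than near a minimum, and it is $C>0$ that makes GSNR defined, while $\text{GSNR}>0$ is equivalent to $G>0$, which is exactly the case split you then make.
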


\begin{proof}
Taking the trace of the Hessian decomposition yields:
\begin{equation}
    \operatorname{Tr}(\mathbf{H}_{t^*}) \stackrel{\eqref{eq:hessian_identity}}{=} \operatorname{Tr}(\operatorname{Cov}[\mathbf{g}_{t^*}]) + \|\nabla \mathcal{L}_{t^*}\|^2 - \operatorname{Tr}(\mathbf{\Xi}_{t^*}).
\end{equation}
Substituting $\|\nabla \mathcal{L}_{t^*}\|^2 = \operatorname{Tr}(\operatorname{Cov}[\mathbf{g}_{t^*}]) \cdot \text{GSNR}_{t^*}$ based on the definition of GSNR, we isolate the covariance trace:
\begin{equation}
    \operatorname{Tr}(\operatorname{Cov}[\mathbf{g}_{t^*}]) = \frac{\operatorname{Tr}(\mathbf{H}_{t^*}) + \operatorname{Tr}(\mathbf{\Xi}_{t^*})}{1 + \text{GSNR}_{t^*}}.
    \label{eq:extracted_trace}
\end{equation}
Applying the trace-based stable rank $\kappa_s(\mathbf{H}_t) \triangleq \frac{\operatorname{Tr}(\mathbf{H}_t)}{\lambda_{\max}(\mathbf{H}_t)}$ to the global COR definition, the structural bound is derived as:
\begin{equation}
\begin{aligned}
    \rho_{\text{critical}} 
    &\stackrel{\eqref{eq:cor_global_def}}{=} \frac{\|\nabla \mathcal{L}_{t^*}\|}{\lambda_{\max}(\mathbf{H}_{t^*})} \\
    &\stackrel{\eqref{eq:gsnr_definition}}{=} \frac{\sqrt{\operatorname{Tr}(\operatorname{Cov}[\mathbf{g}_{t^*}]) \cdot \text{GSNR}_{t^*}}}{\lambda_{\max}(\mathbf{H}_{t^*})} \\
    &\stackrel{\eqref{eq:extracted_trace}}{=} \frac{\kappa_s(\mathbf{H}_{t^*})}{\operatorname{Tr}(\mathbf{H}_{t^*})} \cdot \sqrt{ \frac{\operatorname{Tr}(\mathbf{H}_{t^*}) + \operatorname{Tr}(\mathbf{\Xi}_{t^*})}{1 + \text{GSNR}_{t^*}} \cdot \text{GSNR}_{t^*} } \\
    &= \frac{\kappa_s(\mathbf{H}_{t^*})}{\sqrt{\operatorname{Tr}(\mathbf{H}_{t^*})}} \cdot \sqrt{1 + \frac{\operatorname{Tr}(\mathbf{\Xi}_{t^*})}{\operatorname{Tr}(\mathbf{H}_{t^*})}} \cdot \sqrt{\frac{\text{GSNR}_{t^*}}{1 + \text{GSNR}_{t^*}}}.
\end{aligned}
\end{equation}

\end{proof}

\begin{remark}\normalfont
This tripartite interplay reveals the underlying mechanics of optimization collapse. Detailed derivations are provided in the supplementary material.
\begin{itemize}
    \item Geometric term: It indicates that a flatter landscape, characterized by a higher stable rank $\kappa_s$, inherently accommodates larger perturbations. While SAM explicitly minimizes local sharpness, its execution relies on a sufficiently large COR, a prerequisite violated during Optimization Collapse.
    \item Misspecification term: It captures the model-data discrepancy. As the model converges, the discrepancy $\operatorname{Tr}(\mathbf{\Xi}_{t^*})$ approaches $0$, causing this term to asymptotically approach $1$. It primarily reflects the degree of convergence, independent of specific forgery categories. It fails to explain the dataset-specific nature of the collapse (e.g., its presence in NT and absence in FS), and
    therefore cannot be the cause of optimization collapse.
    \item Statistical term: It emerges as the primary driver of the collapse. Since $\sqrt{\mathrm{GSNR}_{t^*}/(1+\mathrm{GSNR}_{t^*})}$ is strictly monotonically increasing in $\mathrm{GSNR}_{t^*}$, any degradation in GSNR directly contracts the stability boundary. In the collapse regime under vanishing radii, where $\mathrm{GSNR}_{t^*} \ll 1$, the COR reduces to $\rho_{\mathrm{critical}} \propto \sqrt{\mathrm{GSNR}_{t^*}}$. This identifies degraded GSNR as the root cause of Optimization Collapse.
\end{itemize}

\end{remark}

In summary, when the model fails to extract a high-fidelity gradient signal, geometric stability cannot be sustained. Optimization Collapse is therefore a direct symptom of gradient signal degradation.

\subsection{Empirical Verification of Theorem~\ref{them:gsnr_bound}}
\label{sec:theoretical_verification}

To empirically validate Theorem~\ref{them:gsnr_bound}, we visualize the trajectories of GSNR across different forgery datasets and varying layers of the CLIP-ViT-L/14 backbone.
As illustrated in Fig.~\ref{fig:gsnr_analysis}, the GSNR evolution exhibits three distinct phases: 
(i) a Pre-optimization Phase, denoting the initial exploration for optimizable trajectories with a consistently low GSNR; 
(ii) an Optimization Phase, signifying effective optimization marked by a sharp increase in GSNR; 
and (iii) a Convergence Phase, exhibiting a gradual decay in GSNR as the model stabilizes.
Since optimization collapse represents a failure to transition into the second phase, the COR in \eqref{eq:refined_bound} is dictated by the GSNR at the bottleneck step $t^*$ located in the first phase.

We identify a consistent rank correlation between the $\text{GSNR}_{t^*}$ and the COR values shown in Fig.~\ref{fig:collapse_analysis}.
Specifically, in Fig.~\ref{fig:gsnr_analysis}(a), the $\text{GSNR}_{t^*}$ magnitudes follow the order $\text{DF} > \text{FS} > \text{F2F} \approx \text{NT}$, which closely mirrors the COR values observed in the 24-th layer in Fig.~\ref{fig:collapse_analysis}(b);
in Fig.~\ref{fig:gsnr_analysis}(b), we observe a decrease in $\text{GSNR}_{t^*}$ as the layer depth increases on the NT dataset, which corroborates the layer-wise decay trend of COR illustrated in Fig.~\ref{fig:collapse_analysis}(b).
Furthermore, we observe that a lower $\text{GSNR}_{t^*}$, which indicates poorer gradient quality, prolongs the search for a valid descent direction in the first phase.
This results in a delayed transition to the optimization phase, and the duration of this delay serves as an easily observable surrogate for the $\text{GSNR}_{t^*}$ ranking.
Collectively, these empirical evidences substantiate that 
$\text{GSNR}_{t^*}$ is the primary determinant of COR: the monotonic co-decay of GSNR and COR across both datasets and layer depths confirms the structural decomposition in Theorem~\ref{them:gsnr_bound}.

\begin{figure}[t]
    \centering
    \includegraphics[width=0.48\textwidth]{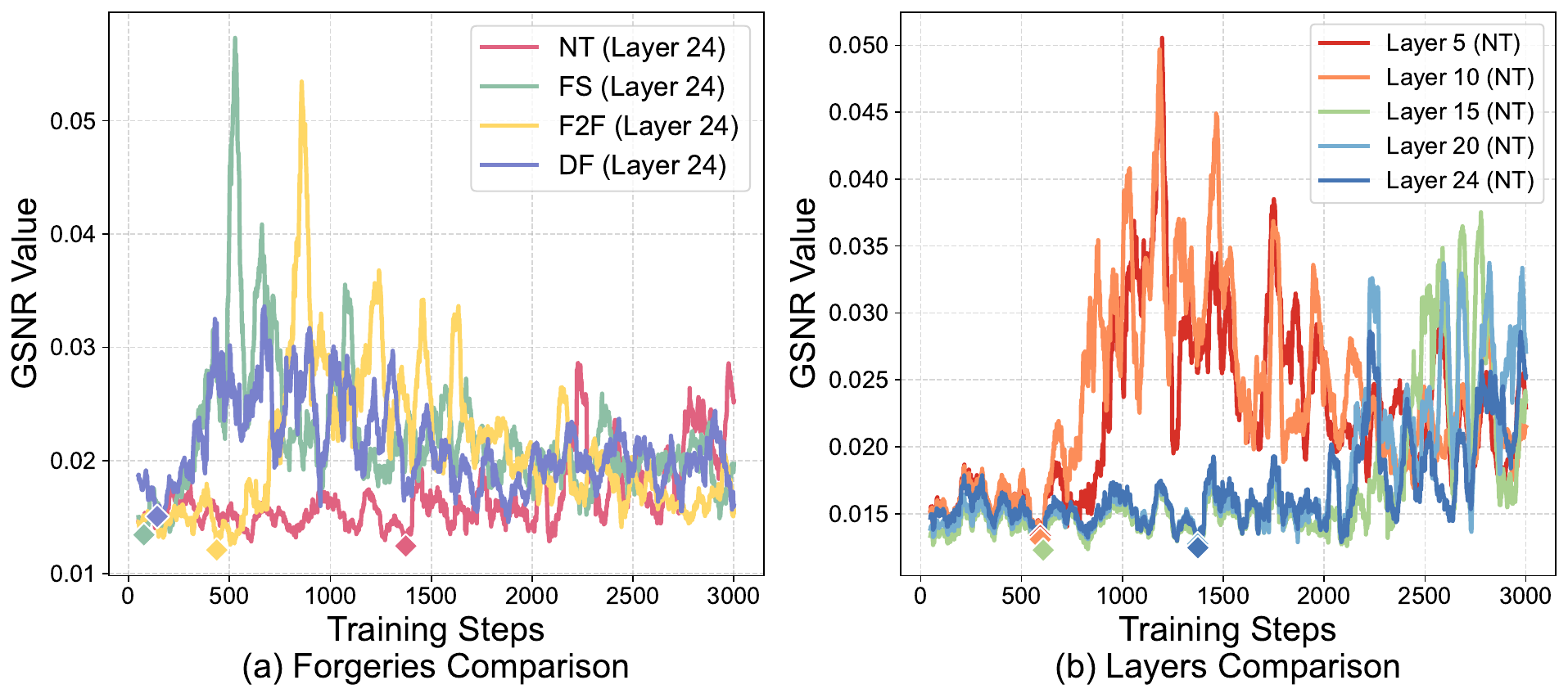}
    \caption{GSNR trajectories during optimization. 
    (a) GSNR values across four datasets at the 24-th layer. 
    (b) Layer-wise GSNR values on NeuralTextures. 
    Diamond markers denote the bottleneck step $t^*$.}
    \label{fig:gsnr_analysis}
\end{figure}

\subsection{Theoretical Insights}
\label{sec:theoretical_insights}
Theorem~\ref{them:gsnr_bound} unveils the intrinsic mechanism of Optimization Collapse: a vanishing COR (e.g., $\rho_{\text{critical}} \approx 0.001$ on NT) implies a diminished GSNR where the directional gradient signal is drowned out by high gradient variance, leading to poor generalization performance.
Therefore, rather than naively reducing the perturbation radius $\rho$, the fundamental solution requires elevating the fidelity of the gradient signal.

This theoretical insight also offers a guideline for our methodology design: by maximizing the GSNR, we lift the COR out of the collapse regime ($\rho_{\text{critical}} < 0.05$), thereby mitigating Optimization Collapse.
In practice, this equates to enhancing the strength and suppressing the variance of gradient signals.

\section{Methodology}
\label{sec:method}
Guided by the theoretical analysis in Section~\ref{sec:theoretical_insights}, we propose the \textbf{Co}ntrastive \textbf{R}egional \textbf{I}njection \textbf{T}ransformer (CoRIT) to mitigate optimization collapse. As shown in Fig.~\ref{fig:pipline}, CoRIT consists of three training-free strategies that aim to maximize the COR. We begin by deriving a computationally efficient Contrastive Gradient Proxy based on feature discrepancy. This proxy is used to construct a Region Refinement Mask that reduces gradient variance. The refined mask is then coupled with a Regional Signal Injection mechanism to counteract the layer-wise attenuation of non-semantic forensic signals. Finally, Hierarchical Representation Integration fuses intermediate signals with deep context to yield generalizable multi-scale representations.

\subsection{Contrastive Gradient Proxy}\label{subsec:gsnr_theory}

Maximizing the COR necessitates elevating the GSNR of the model parameters $\mathbf{w}$.
However, directly maximizing $\text{GSNR}(\mathbf{w})$ as an auxiliary optimization objective is computationally impractical; it requires per-sample backpropagation, which imposes an untenable overhead for large-scale architectures like CLIP-ViT-Large.
To circumvent this computational bottleneck, we shift our analytical focus from the parameter space to the feature space.
By the chain rule, the parameter gradients depend on the visual tokens $\mathbf{V}^{(l)}$ at layer $l$ via $\nabla_{\mathbf{w}} \mathcal{L} = (\partial \mathbf{V}^{(l)} / \partial \mathbf{w})^\top \nabla_{\mathbf{V}^{(l)}} \mathcal{L}$. This relationship implies that the statistical stability of the feature gradient $\nabla_{\mathbf{V}^{(l)}} \mathcal{L}$ serves as a necessary precondition for the stability of parameter gradients.
To enable efficient implementation, inspired by the Self-Blended Image (SBI) paradigm~\cite{DBLP:conf/cvpr/ShioharaY22}, we propose a gradient proxy to approximate gradient signals using feature-space contrastive discrepancies under a frozen backbone setting.

Let $\mathbf{V}^{(l)} \in \mathbb{R}^{N \times D}$ denote the visual tokens and $h(\cdot)$ the classifier head mapping features to the predictive logit $y$. 
The SBI~\cite{DBLP:conf/cvpr/ShioharaY22} training objective maximizes the logit discrepancy $\Delta y$ between the original visual tokens $\mathbf{V}^{(l)}_{\text{orig}}$ and their Self-Blended counterpart $\mathbf{V}^{(l)}_{\text{SBI}}$. 
A first-order Taylor expansion of $h$ gives:
\begin{equation}
    \Delta y = h(\mathbf{V}^{(l)}_{\text{SBI}}) - h(\mathbf{V}^{(l)}_{\text{orig}}) \approx \langle \nabla_{\mathbf{V}^{(l)}} h, \Delta \mathbf{V}^{(l)} \rangle,
\end{equation}
where $\Delta \mathbf{V}^{(l)} = \mathbf{V}^{(l)}_{\text{SBI}} - \mathbf{V}^{(l)}_{\text{orig}}$ denotes the feature discrepancy. 
According to the Cauchy-Schwarz inequality, $\Delta y$ is maximized when the gradient direction $\nabla_{\mathbf{V}^{(l)}} h$ aligns with the feature discrepancy $\Delta \mathbf{V}^{(l)}$.
We define the Contrastive Gradient Proxy (CGP) $\mathbf{g}_{\text{CGP}}^{(l)}$ as:
\begin{equation}
\mathbf{g}_{\text{CGP}}^{(l)} \triangleq \Delta \mathbf{V}^{(l)}.
\end{equation}

This proxy enables efficient GSNR estimation via two forward passes, eliminating backpropagation and the memory overhead of gradient storage.

\begin{figure*}[t] 
    \centering
    \includegraphics[width=1\textwidth]{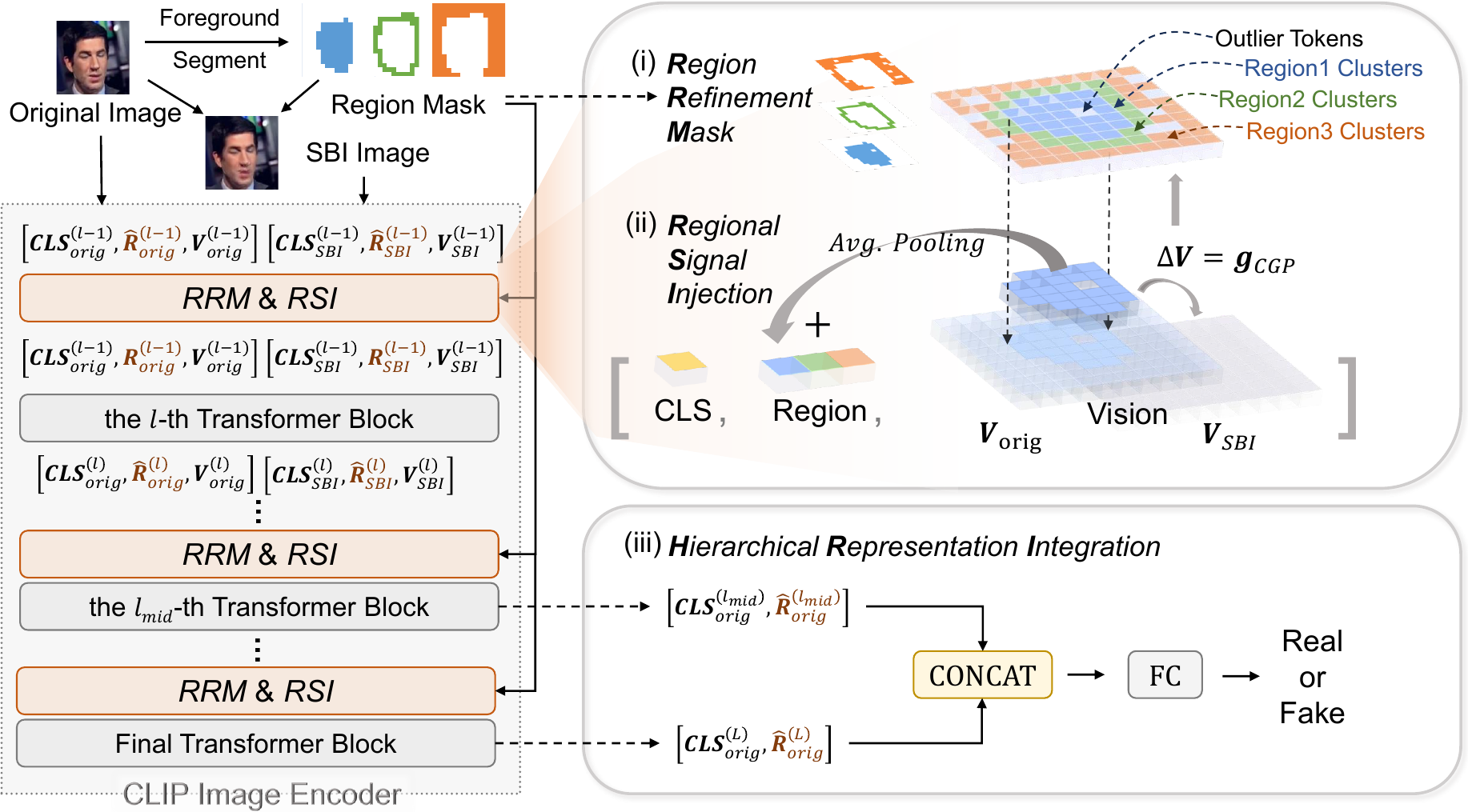}
    \caption{Overview of the proposed \textbf{Co}ntrastive \textbf{R}egional \textbf{I}njection \textbf{T}ransformer (CoRIT) pipeline. Given an original image and its Self-Blended Image (SBI) counterpart, both are fed through the frozen CLIP image encoder in parallel. The discrepancy of visual tokens serves as the Contrastive Gradient Proxy (CGP). At each transformer layer, CoRIT applies three training-free components: \textbf{(i)} The \textit{Region Refinement Mask} (RRM) clusters visual tokens around region anchors derived from the CGP, filtering out outlier tokens. \textbf{(ii)} The \textit{Regional Signal Injection} (RSI) aggregates the refined tokens via average pooling and injects them into additional Region Tokens through intra-layer residual connections. \textbf{(iii)} The \textit{Hierarchical Representation Integration} (HRI) concatenates the class token and region tokens from an intermediate layer $l_{\text{mid}}$ and the final layer $L$ for binary classification.}
    \label{fig:pipline} 
    \vspace{-10pt}
\end{figure*}

\subsection{Region Refinement Mask}
\label{subsec:rrm}

ViTs are prone to outlier tokens~\cite{DBLP:conf/iclr/DarcetOMB24} and domain-specific artifacts~\cite{DBLP:conf/icml/0002WJZLCYDW025}, which amplify gradient variance, leading to a degraded GSNR.
We introduce a Region Refinement Mask to calibrate the feature maps by focusing on spatial regions with high CGP coherence. 
A subsequent region-wise average pooling within these designated areas suppresses CGP variance without compromising representational diversity, securing a higher regional GSNR.

\subsubsection{Region Anchor Initialization}
Given a feature map consisting of $N$ visual tokens $\mathbf{V}^{(l)} = \{\mathbf{v}^{(l)}_i\}_{i=1}^N$, we employ $K$ pre-defined regions, where $\mathcal{N}_k$ denotes the set of token indices in the $k$-th region. 
For the $k$-th region at layer $l$, we compute a Region Anchor $\mathbf{c}^{(l)}_k$ to capture the dominant gradient direction. 
Using the CGP of visual tokens $\Delta \mathbf{v}^{(l)}_i = \mathbf{v}^{(l)}_{i, \text{SBI}} - \mathbf{v}^{(l)}_{i, \text{orig}}$, we formulate the anchor as the centroid of the gradients within the region:
\begin{equation}
    \mathbf{c}^{(l)}_k = \frac{1}{|\mathcal{N}_k|} \sum_{i \in \mathcal{N}_k} \Delta \mathbf{v}^{(l)}_i, \quad \mathbf{d}^{(l)}_k = \frac{\mathbf{c}^{(l)}_k}{\|\mathbf{c}^{(l)}_k\|_2},
\end{equation}
where $|\mathcal{N}_k|$ represents the number of tokens in region $k$, and $\mathbf{d}^{(l)}_k$ is the normalized directional vector of the region anchor.

\subsubsection{Region Refinement and Masking}
To filter outlier tokens, we refine the region $k$ by clustering tokens around the region anchor $\mathbf{c}^{(l)}_k$. We compute the projection $p^{(l)}_{k,i}$ of each token's proxy gradient onto this anchor direction: 
$p^{(l)}_{k,i} = \langle \Delta \mathbf{v}^{(l)}_i, \mathbf{d}^{(l)}_k \rangle$.
However, relying solely on this directional alignment is often insufficient, as we observe that the inherent noise in shallow layers induces cluster dispersion.
Inspired by visual contrastive decoding~\cite{DBLP:conf/cvpr/LengZCLLMB24}, we exclude tokens residing outside the pre-defined regions, thereby enhancing clustering stability.
We construct a binary region refinement mask $\hat{M}$ by thresholding these projections under this spatial constraint:
\begin{equation}
    \hat{M}^{(l)}_{k,i} = \mathbb{I}(p^{(l)}_{k,i} > \alpha \cdot \|\mathbf{c}^{(l)}_k\|_2) \cdot \mathbb{I}(i \in \mathcal{N}_k),
\end{equation}
where $\mathbb{I}(\cdot)$ is the indicator function, and $\alpha$ serves as a hyperparameter controlling the filtering threshold.

\subsubsection{Region Average Pooling}
To exploit average pooling as a noise reduction mechanism, we aggregate features within the refined regions to elevate the GSNR.
The region token $\mathbf{r}^{(l)}_k$ is computed by averaging the visual tokens $\mathbf{V}^{(l)}$ masked by $\hat{M}^{(l)}$:
\begin{equation}
    \label{eq:region_aggregation}
    \mathbf{r}^{(l)}_k = \frac{\sum_{i} \hat{M}^{(l)}_{k,i} \cdot \mathbf{v}^{(l)}_i}{\sum_{i} \hat{M}^{(l)}_{k,i} + \epsilon},
\end{equation}
where $\mathbf{v}^{(l)}_i$ denotes the $i$-th visual token in layer $l$, and $\epsilon=10^{-6}$ ensures numerical stability.

\subsection{Regional Signal Injection}
\label{subsec:rsi}
The layer-wise COR attenuation observed on NT and F2F in Fig.~\ref{fig:collapse_analysis}(b) reveals that the GSNR of non-semantic signals progressively decays.
We propose the Regional Signal Injection mechanism to counteract this degradation. This mechanism iteratively injects the high-GSNR regional features identified by RRM into a set of additional Region Tokens $\mathbf{R} \in \mathbb{R}^{K \times D}$, preserving generalizable non-semantic forgery representations within the final layer tokens.

\subsubsection{Iterative Joint Modeling} 
We initialize the region tokens as $\mathbf{R}^{(0)} = \mathbf{0}$. At layer $l$, the preceding region tokens $\mathbf{R}^{(l-1)}$ are concatenated with the class token $\mathbf{CLS}^{(l-1)}$ and visual tokens $\mathbf{V}^{(l-1)}$. The Transformer block $\mathcal{T}_l$ processes this joint sequence:
\begin{equation}
    [\mathbf{CLS}^{(l)}, \hat{\mathbf{R}}^{(l)}, \mathbf{V}^{(l)}] = \mathcal{T}_l \left( [\mathbf{CLS}^{(l-1)}, \mathbf{R}^{(l-1)}, \mathbf{V}^{(l-1)}] \right).
    \label{eq:joint_model}
\end{equation}

\subsubsection{Intra-Layer Residual Injection} 
To preserve generalizable signal strength across layers, we update the intermediate region tokens $\hat{\mathbf{R}}^{(l)}_k$ by injecting the high-GSNR regional features $\mathbf{r}^{(l)}_k$ defined in Eq.~\eqref{eq:region_aggregation} via an intra-layer residual connection:
\begin{equation}
    \mathbf{R}^{(l)}_k = \hat{\mathbf{R}}^{(l)}_k + \mathbf{r}^{(l)}_k.
    \label{eq:recurrent_update}
\end{equation}

The updated $\mathbf{R}^{(l)}_k$ serves as the input for the subsequent layer $l+1$ in Eq.~\eqref{eq:joint_model}.

\subsection{Hierarchical Representation Integration}
\label{subsec:hri}
Empirical observations in Fig.~\ref{fig:collapse_analysis}(b) suggest that middle layers capture more generalizable non-semantic forensic artifacts. We introduce a simple yet effective strategy, Hierarchical Representation Integration, to explicitly fuse middle-layer signals with deep-layer context.

We define the forensic representation as the concatenation of the class token and the region tokens: $\mathbf{F}^{(l)} = [\mathbf{CLS}^{(l)}, \mathbf{R}^{(l)}]$. 
The final output integrates representations from a predefined intermediate layer $l_{\text{mid}}$ and the final layer $L$:
\begin{equation}
    \mathbf{F}_{\text{out}} = \text{Concat}(\mathbf{F}^{(l_{\text{mid}})}, \mathbf{F}^{(L)}).
\end{equation}

By preserving non-semantic forgery representations with high GSNR, $\mathbf{F}_{\text{out}}$ provides a robust input for final classification and downstream tasks.

\begin{table}[t]
    \centering
    \caption{Cross-domain performance comparison on classical  forensic benchmarks (Frame-Level AUC (\%)). $\dagger$ denotes methods using a frozen backbone with a lightweight learnable classifier. Methods below the midrule share the same CLIP-ViT-L/14 backbone and SAM optimizer. Best results are in \textbf{bold}.}
    \label{tab:ff_cross}
    
    \resizebox{0.48\textwidth}{!}{

    \begin{tabular}{l|c|cccccc}
        \toprule
        Method & Venues & CDFv1 & CDFv2 & DFDCP & DFDC & DFD & Avg. \\
        \midrule
        Xception~\cite{DBLP:conf/iccv/RosslerCVRTN19} & ICCV'19 & 77.9 & 73.7 & 73.7 & 70.8 & 81.6 & 75.5 \\
        EfficientB4~\cite{DBLP:conf/icml/TanL19} & ICML'19 & 79.1 & 74.9 & 72.8 & 69.6 & 81.5 & 75.6 \\
        FWA~\cite{DBLP:conf/cvpr/LiL19c} & CVPRW'19 & 79.0 & 66.8 & 63.8 & - & 74.0 & - \\
        F3Net~\cite{DBLP:conf/eccv/QianYSCS20} & AAAI'20 & 77.7 & 73.5 & 73.5 & 70.2 & 79.8 & 74.9 \\
        Face X-ray~\cite{DBLP:conf/cvpr/LiBZYCWG20} & CVPR'20 & 70.9 & 67.9 & 69.4 & 63.3 & 76.6 & 69.6 \\
        FFD~\cite{DBLP:conf/cvpr/DangLS0020} & CVPR'20 & 78.4 & 74.4 & 74.3 & 70.3 & 80.2 & 75.5 \\
        SPSL~\cite{DBLP:conf/cvpr/LiuLZCH0ZY21} & CVPR'21 & 81.5 & 76.5 & 74.1 & 70.4 & 81.2 & 76.7 \\
        SRM~\cite{DBLP:conf/cvpr/LuoZY021} & CVPR'21 & 79.3 & 75.5 & 74.1 & 70.0 & 81.2 & 76.0 \\
        Recce~\cite{DBLP:conf/cvpr/Cao0YCDY22} & CVPR'22 & 76.8 & 73.2 & 73.4 & 71.3 & 81.2 & 75.2 \\
        SBI~\cite{DBLP:conf/cvpr/ShioharaY22} & CVPR'22 & - & 81.3 & 79.9 & - & 77.4 & - \\
        UCF~\cite{DBLP:conf/iccv/0002ZFW23} & ICCV'23 & 77.9 & 75.3 & 75.9 & 71.9 & 80.7 & 76.3 \\
        ED~\cite{DBLP:conf/aaai/BaLLWLL024} & AAAI'24 & 81.8 & 86.4 & 85.1 & 72.1 & - & - \\
        FoCus~\cite{DBLP:journals/tifs/TianCYFWDH24} & TIFS'24 & - & 72.0 & 77.8 & 66.9 & - & - \\
        LSDA~\cite{DBLP:conf/cvpr/0002LLLW24} & CVPR'24 & 86.7 & 83.0 & 81.5 & 73.6 & 88.0 & 82.6 \\
        DiffFake~\cite{DBLP:conf/nips/0016CY0SDJ24} & NIPS'24 & - & 80.5 & 81.0 & - & 90.4 & - \\
        UDD~\cite{DBLP:conf/aaai/FuYYCL25} & AAAI'25 & - & 86.9 & 85.6 & 75.8 & 91.0 & - \\
        KND~\cite{liu2025knowledge} & MM'25 & \textbf{91.6} &   81.5 &  80.3 &  78.2 &  87.2 &  83.8 \\
        FreqDebias~\cite{DBLP:conf/cvpr/KashianiTA25} & CVPR'25 & 87.5 & 83.6 & 86.8 & 82.4 & 74.1 & 82.9 \\
        ED$^4$~\cite{DBLP:journals/tip/ChengZZYLWL25} & TIP'25 & 88.7 & 83.9 & - & 74.3 & 87.7 & - \\
        FIA-USA~\cite{DBLP:journals/corr/abs-2504-04827} & NIPS'25 & 90.1 & 86.7 & 81.8 & - & 82.1 & - \\
        \midrule 
        UniFD$^{\dagger}$~\cite{DBLP:conf/cvpr/OjhaLL23} & CVPR'23 & 74.3 & 74.9 & 75.5 & 73.6 & 86.0 & 76.9 \\
        ForAda~\cite{DBLP:conf/cvpr/CuiLLZD25} & CVPR'25 & 83.7 & 81.0 & 81.6 & 81.6 & 90.9 & 83.8 \\
        \textbf{Ours}$^{\dagger}$ & - & 90.9 & \textbf{89.1} & \textbf{88.8} & \textbf{84.5} & \textbf{91.3} & \textbf{88.9} \\
        \bottomrule
    \end{tabular}
    }
\end{table}

\begin{figure}[t]
    \centering
    \includegraphics[width=\linewidth]{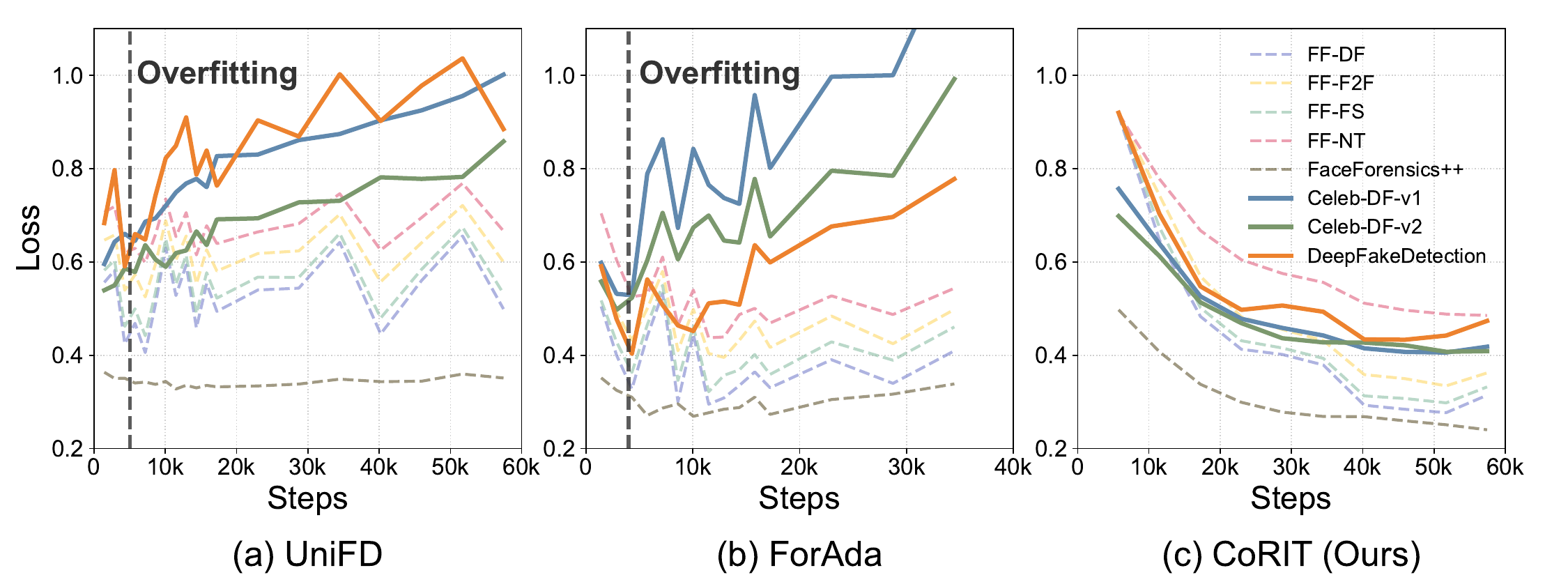}
    \vspace{-10pt}
    \caption{Training loss curves of compared methods illustrating generalization stability. Dashed lines indicate intra-dataset evaluation; solid lines indicate cross-dataset evaluation.}
    \label{fig:loss_compare}
\end{figure}

\begin{table}[t]
    \centering
    \caption{Cross-domain performance on classical  forensic benchmarks (Video-Level AUC (\%)). Best results are in \textbf{bold}.}
    \label{tab:ff_cross_video}
    
    \resizebox{0.42\textwidth}{!}{
        \begin{tabular}{l|c|cccc}
            \toprule
            Method & Venues & CDFv2 & DFDC & DFDCP & Avg. \\
            \midrule
            LipForensics~\cite{DBLP:conf/cvpr/HaliassosVPP21} & CVPR'21 & 82.4 & 73.5 & - & - \\
            FTCN~\cite{DBLP:conf/iccv/ZhengB0ZW21} & ICCV'21 & 86.9 & - & 74.0 & - \\
            PCL + I2G~\cite{DBLP:conf/iccv/ZhaoXXDXX21} & ICCV'21 & 90.0 & 67.5 & 74.3 & 77.3 \\
            HCIL~\cite{DBLP:conf/eccv/GuYCDM22} & ECCV'22 & 79.0 & 69.2 & - & - \\
            SBI~\cite{DBLP:conf/cvpr/ShioharaY22} & CVPR'22 & 92.8 & 71.9 & 85.5 & 83.4 \\
            3DDet.~\cite{DBLP:journals/pami/ZhuFZZZLL23} & TPAMI'23 & 72.0 & 72.4 & - & - \\
            LSDA~\cite{DBLP:conf/cvpr/0002LLLW24} & CVPR'24 & 91.1 & 77.0 & - & - \\
            CFM~\cite{DBLP:journals/tifs/LuoKHHKK24} & TIFS'24 & 89.7 & - & 80.2 & - \\
            LAA-Net~\cite{DBLP:conf/cvpr/NguyenMSKA0GA24} & CVPR'24 & 84.0 & - & 74.1 & - \\
            FIA-USA~\cite{DBLP:journals/corr/abs-2504-04827} & NIPS'25 & 83.9 & 74.3 & - & - \\
            Effort~\cite{DBLP:conf/icml/0002WJZLCYDW025} & ICML'25 & \textbf{95.6} & 84.3 & 90.9 & 90.3 \\
            \midrule
            \textbf{Ours} & - & 94.1 & \textbf{87.6} & \textbf{91.3} & \textbf{91.0} \\
            \bottomrule
        \end{tabular}
    }
\end{table}

\begin{table}[t]
    \centering
    \caption{Cross-domain performance on five representative face forgery types in DF40 (Frame-Level AUC (\%)). Best results are in \textbf{bold}.}
    \label{tab:df40_cross}
    
    \resizebox{0.48\textwidth}{!}{
        \begin{tabular}{l|c|cccccc}
            \toprule
            Method & Venues & uniface & e4s & facedancer & fsgan & inswap & Avg. \\
            \midrule
            Recce~\cite{DBLP:conf/cvpr/Cao0YCDY22} & CVPR'22 & 84.2 & 65.2 & 78.3 & 88.4 & 79.5 & 79.1 \\
            SBI~\cite{DBLP:conf/cvpr/ShioharaY22} & CVPR'22 & 64.4 & 69.0 & 44.7 & 87.9 & 63.3 & 65.9 \\
            CORE~\cite{DBLP:conf/cvpr/NiMYQRZ22} & CVPRW'22 & 81.7 & 63.4 & 71.7 & 91.1 & 79.4 & 77.5 \\
            IID~\cite{DBLP:conf/cvpr/HuangWYA00Y23} & CVPR'23 & 79.5 & 71.0 & 79.0 & 86.4 & 74.4 & 78.1 \\
            UCF~\cite{DBLP:conf/iccv/0002ZFW23} & ICCV'23 & 78.7 & 69.2 & 80.0 & 88.1 & 76.8 & 78.6 \\
            LSDA~\cite{DBLP:conf/cvpr/0002LLLW24} & CVPR'24 & 85.4 & 68.4 & 75.9 & 83.2 & 81.0 & 78.8 \\
            CDFA~\cite{DBLP:conf/eccv/LinSLLNCL24} & ECCV'24 & 76.5 & 67.4 & 75.4 & 84.8 & 72.0 & 75.2 \\
            ProgressiveDet~\cite{DBLP:conf/nips/Cheng0ZL0L24} & NIPS'24 & 84.5 & 71.0 & 73.6 & 86.5 & 78.8 & 78.9 \\
            FIA-USA~\cite{DBLP:journals/corr/abs-2504-04827} & NIPS'25 & \textbf{91.8} & 87.5 & 83.0 & 86.3 & 87.4 & 87.2 \\
            \midrule 
            UniFD~\cite{DBLP:conf/cvpr/OjhaLL23} & CVPR'23 & 82.7 & 91.4 & 77.9 & 85.6 & 76.5 & 80.0 \\
            \textbf{Ours} & - & \textbf{91.8} & \textbf{94.4} & \textbf{89.2} & \textbf{94.2} & \textbf{88.4} & \textbf{91.6} \\
            \bottomrule
        \end{tabular}
    }
\end{table}

\begin{table*}[t]
\centering
\caption{Generalization performance (mAP (\%)) on next-generation generative benchmarks. Best and second-best results are in \textbf{bold} and \underline{underline}.}
\label{tab:unidf_ap}
\resizebox{\textwidth}{!}{%
\begin{tabular}{l|cccccc|c|cc|cc|c|ccc|ccc|c|c}
\toprule
\multirow{3}{*}{Method} & \multicolumn{6}{c|}{GAN} & \multirow{3}{*}{\shortstack{Deep\\fakes}} & \multicolumn{2}{c|}{Low level} & \multicolumn{2}{c|}{Perceptual loss} & \multirow{3}{*}{Guided} & \multicolumn{3}{c|}{LDM} & \multicolumn{3}{c|}{Glide} & \multirow{3}{*}{Dalle} & \multirow{3}{*}{mAP} \\
\cmidrule{2-7} \cmidrule{9-10} \cmidrule{11-12} \cmidrule{14-16} \cmidrule{17-19}
 & Pro- & Cycle- & Big- & Style- & Gau- & Star- & & \multirow{2}{*}{SITD} & \multirow{2}{*}{SAN} & \multirow{2}{*}{CRN} & \multirow{2}{*}{IMLE} & & 200 & 200 & 100 & 100 & 50 & 100 & & \\
 & GAN & GAN & GAN & GAN & GAN & GAN & & & & & & & steps & w/cfg & steps & 27 & 27 & 10 & & \\
\midrule
\midrule
CNN-Spot~\cite{DBLP:conf/cvpr/WangW0OE20} & \textbf{100.0} & 93.47 & 84.50 & 99.54 & 89.49 & 98.15 & 89.02 & 73.75 & 59.47 & \underline{98.24} & \underline{98.40} & 73.72 & 70.62 & 71.00 & 70.54 & 80.65 & 84.91 & 82.07 & 70.59 & 83.58 \\
PatchForensics~\cite{DBLP:conf/eccv/ChaiBLI20} & 80.88 & 72.84 & 71.66 & 85.75 & 65.99 & 69.25 & 76.55 & 76.19 & \underline{76.34} & 74.52 & 68.52 & 75.03 & 87.10 & 86.72 & 86.40 & 85.37 & 83.73 & 78.38 & 75.67 & 77.73 \\
Co-occurrence~\cite{DBLP:conf/mediaforensics/NatarajMMCFBR19} & 99.74 & 80.95 & 50.61 & 98.63 & 53.11 & 67.99 & 59.14 & 68.98 & 60.42 & 73.06 & 87.21 & 70.20 & 91.21 & 89.02 & 92.39 & 89.32 & 88.35 & 82.79 & 80.96 & 78.11 \\
Freq-spec~\cite{DBLP:conf/wifs/0022KC19} & 55.39 & \textbf{100.0} & 75.08 & 55.11 & 66.08 & \textbf{100.0} & 45.18 & 47.46 & 57.12 & 53.61 & 50.98 & 57.72 & 77.72 & 77.25 & 76.47 & 68.58 & 64.58 & 61.92 & 67.77 & 66.21 \\
F3Net~\cite{DBLP:conf/eccv/QianYSCS20} & \underline{99.96} & 84.32 & 69.90 & 99.72 & 56.71 & \textbf{100.0} & 78.82 & 52.89 & 46.70 & 63.39 & 64.37 & 70.53 & 73.76 & 81.66 & 74.62 & 89.81 & 91.04 & 90.86 & 71.84 & 76.89 \\
LGrad~\cite{tan2023learning} & \textbf{100.0} & 93.98 & 90.69 & 99.86 & 79.36 & \underline{99.98} & 67.91 & 59.42 & 51.42 & 63.52 & 69.61 & 87.06 & 99.03 & 99.16 & 99.18 & 93.23 & 95.10 & 94.93 & 97.23 & 86.35 \\
FreqNet~\cite{DBLP:conf/aaai/Tan0WGLW24} & 99.92 & \underline{99.63} & \underline{96.05} & 99.89 & \underline{99.71} & 98.63 & \textbf{99.92} & 94.42 & 74.59 & 80.10 & 75.70 & 96.27 & 96.06 & \textbf{100.0} & 62.34 & \underline{99.80} & \underline{99.78} & 96.39 & 77.78 & 91.95 \\
NPR~\cite{DBLP:conf/cvpr/TanLZWGLW24} & \textbf{100.0} & 99.53 & 94.53 & \textbf{99.94} & 88.82 & \textbf{100.0} & 84.41 & \textbf{97.95} & \textbf{99.99} & 50.16 & 50.16 & \textbf{98.26} & \textbf{99.92} & \underline{99.91} & \textbf{99.92} & \textbf{99.87} & \textbf{99.89} & \textbf{99.92} & \underline{99.26} & \underline{92.76} \\

\midrule
UniFD~\cite{DBLP:conf/cvpr/OjhaLL23} & \textbf{100.0} & 98.13 & 94.46 & 86.66 & 99.25 & 99.53 & 91.67 & 78.54 & 67.54 & 83.12 & 91.06 & 79.24 & 95.81 & 79.77 & 95.93 & 93.93 & 95.12 & 94.59 & 88.45 & 90.14 \\
\textbf{Ours} & \textbf{100.0} & \textbf{100.0} & \textbf{99.96} & \underline{99.91} & \textbf{100.0} & \textbf{100.0} & \underline{92.24} & \underline{97.14} & 65.89 & \textbf{99.95} & \textbf{100.0} & \underline{96.31} & \underline{99.81} & 98.30 & \underline{99.81} & 99.08 & 99.19 & \underline{99.00} & \textbf{99.56} & \textbf{97.17} \\
\bottomrule
\end{tabular}%
}
\end{table*}

\begin{table*}[t]
\centering
\caption{Generalization performance (mAcc (\%)) on next-generation generative benchmarks. Best and second-best results are in \textbf{bold} and \underline{underline}.}
\label{tab:unidf_acc}
\resizebox{\textwidth}{!}{
\begin{tabular}{l|cccccc|c|cc|cc|c|ccc|ccc|c|c}
\toprule
\multirow{3}{*}{Method} & \multicolumn{6}{c|}{GAN} & \multirow{3}{*}{\shortstack{Deep\\fakes}} & \multicolumn{2}{c|}{Low level} & \multicolumn{2}{c|}{Perceptual loss} & \multirow{3}{*}{Guided} & \multicolumn{3}{c|}{LDM} & \multicolumn{3}{c|}{Glide} & \multirow{3}{*}{Dalle} & \multirow{3}{*}{mAcc} \\
\cmidrule{2-7} \cmidrule{9-10} \cmidrule{11-12} \cmidrule{14-16} \cmidrule{17-19}
 & Pro- & Cycle- & Big- & Style- & Gau- & Star- & & \multirow{2}{*}{SITD} & \multirow{2}{*}{SAN} & \multirow{2}{*}{CRN} & \multirow{2}{*}{IMLE} & & 200 & 200 & 100 & 100 & 50 & 100 & & \\
 & GAN & GAN & GAN & GAN & GAN & GAN & & & & & & & steps & w/cfg & steps & 27 & 27 & 10 & & \\
\midrule
\midrule
CNN-Spot~\cite{DBLP:conf/cvpr/WangW0OE20} & \underline{99.99} & 85.20 & 70.20 & 85.70 & 78.95 & 91.70 & 53.47 & 66.67 & 48.69 & \underline{86.31} & \underline{86.26} & 60.07 & 54.03 & 54.96 & 54.14 & 60.78 & 63.80 & 65.66 & 55.58 & 69.58 \\
PatchForensics~\cite{DBLP:conf/eccv/ChaiBLI20} & 75.03 & 68.97 & 68.47 & 79.16 & 64.23 & 63.94 & 75.54 & 75.14 & \underline{75.28} & 72.33 & 55.30 & 67.41 & 76.50 & 76.10 & 75.77 & 74.81 & 73.28 & 68.52 & 67.91 & 71.24 \\
Co-occurrence~\cite{DBLP:conf/mediaforensics/NatarajMMCFBR19} & 97.70 & 63.15 & 53.75 & 92.50 & 51.10 & 54.70 & 57.10 & 63.06 & 55.85 & 65.65 & 65.80 & 60.50 & 70.70 & 70.55 & 71.00 & 70.25 & 69.60 & 69.90 & 67.55 & 66.86 \\
Freq-spec~\cite{DBLP:conf/wifs/0022KC19} & 49.90 & \textbf{99.90} & 50.50 & 49.90 & 50.30 & 99.70 & 50.10 & 50.00 & 48.00 & 50.60 & 50.10 & 50.90 & 50.40 & 50.40 & 50.30 & 51.70 & 51.40 & 50.40 & 50.00 & 55.45 \\
F3Net~\cite{DBLP:conf/eccv/QianYSCS20} & 99.38 & 76.38 & 65.33 & 92.56 & 58.10 & \textbf{100.0} & 63.48 & 54.17 & 47.26 & 51.47 & 51.47 & 69.20 & 68.15 & 75.35 & 68.80 & 81.65 & 83.25 & 83.05 & 66.30 & 71.33 \\
LGrad~\cite{tan2023learning} & 99.84 & 85.39 & 82.88 & 94.83 & 72.45 & 99.62 & 58.00 & 62.50 & 50.00 & 50.74 & 50.78 & 77.50 & 94.20 & 95.85 & 94.80 & 87.40 & 90.70 & \underline{89.55} & \underline{88.35} & 80.28 \\
FreqNet~\cite{DBLP:conf/aaai/Tan0WGLW24} & 97.90 & 95.84 & 90.45 & \textbf{97.55} & 90.24 & 93.41 & \textbf{97.40} & \textbf{88.92} & 59.04 & 71.92 & 67.35 & \textbf{86.70} & 84.55 & \textbf{99.58} & 65.56 & 85.69 & \textbf{97.40} & 88.15 & 59.06 & 85.09 \\
NPR~\cite{DBLP:conf/cvpr/TanLZWGLW24} & 99.84 & 95.00 & 87.55 & \underline{96.23} & 86.57 & \underline{99.75} & \underline{76.89} & 66.94 & \textbf{98.63} & 50.00 & 50.00 & \underline{84.55} & \textbf{97.65} & \underline{98.00} & \textbf{98.20} & \textbf{96.25} & \underline{97.15} & \textbf{97.35} & 87.15 & \underline{87.56} \\

\midrule
UniFD~\cite{DBLP:conf/cvpr/OjhaLL23} & \textbf{100.0} & 98.50 & \underline{94.50} & 82.00 & \textbf{99.50} & 97.00 & 66.60 & 63.00 & 57.50 & 59.50 & 72.00 & 70.03 & 94.19 & 73.76 & 94.36 & 79.07 & 79.85 & 78.14 & 86.78 & 81.38 \\
\textbf{Ours} & \underline{99.99} & \underline{99.70} & \textbf{98.95} & 93.33 & \underline{99.20} & \textbf{100.0} & 51.90 & \underline{87.22} & 59.36 & \textbf{97.35} & \textbf{97.35} & 78.25 & \underline{97.00} & 86.60 & \underline{96.55} & \underline{88.90} & 91.25 & 88.65 & \textbf{95.35} & \textbf{89.84} \\
\bottomrule
\end{tabular}%
}
\end{table*}

\begin{table}[t]
    \centering
    \caption{Cross-manipulation performance in FF++ dataset (Frame-Level AUC (\%)). Best results are in \textbf{bold}.}
    \label{tab:cross_manipulation}
    
    \setlength{\tabcolsep}{4pt}
    
    \resizebox{0.65\columnwidth}{!}{
        \begin{tabular}{lccccc} 
            \toprule
            \multirow{2}{*}{Method} & \multicolumn{4}{c}{Train on remaining three} & \multirow{2}{*}{Avg} \\
            \cmidrule(lr){2-5} 
             & DF & F2F & FS & NT & \\
            \midrule
            UniFD~\cite{DBLP:conf/cvpr/OjhaLL23} & 95.8 & 75.0 & 90.1 & 59.4 & 80.1 \\
            ForAda~\cite{DBLP:conf/cvpr/CuiLLZD25} & 97.0 & 83.3 & 92.0 & 64.4 & 84.2 \\
            \textbf{Ours} & \textbf{98.3} & \textbf{87.5} & \textbf{93.6} & \textbf{67.6} & \textbf{86.8} \\
            \bottomrule
        \end{tabular}%
    }
\end{table}

\begin{table}[t]
    \centering
    \caption{Layer-wise COR comparison between UniFD and our CoRIT on the NT dataset. Best results are in \textbf{bold}.}
    \label{tab:layer_cor}
    
    \resizebox{0.7\columnwidth}{!}{%
        \begin{tabular}{l|ccccc}
            \toprule
            Method & L5 & L10 & L15 & L20 & L24 \\
            \midrule
            UniFD~\cite{DBLP:conf/cvpr/OjhaLL23} & 0.57 & 0.49  & 0.04  & 0.03 & 0.01 \\
            \textbf{Ours} & \textbf{0.60} & \textbf{2.67} & \textbf{2.60}  & \textbf{0.40} & \textbf{0.38} \\
            \bottomrule
        \end{tabular}%
    }
\end{table}

\begin{figure}[t]
    \centering
    \includegraphics[width=0.4\textwidth]{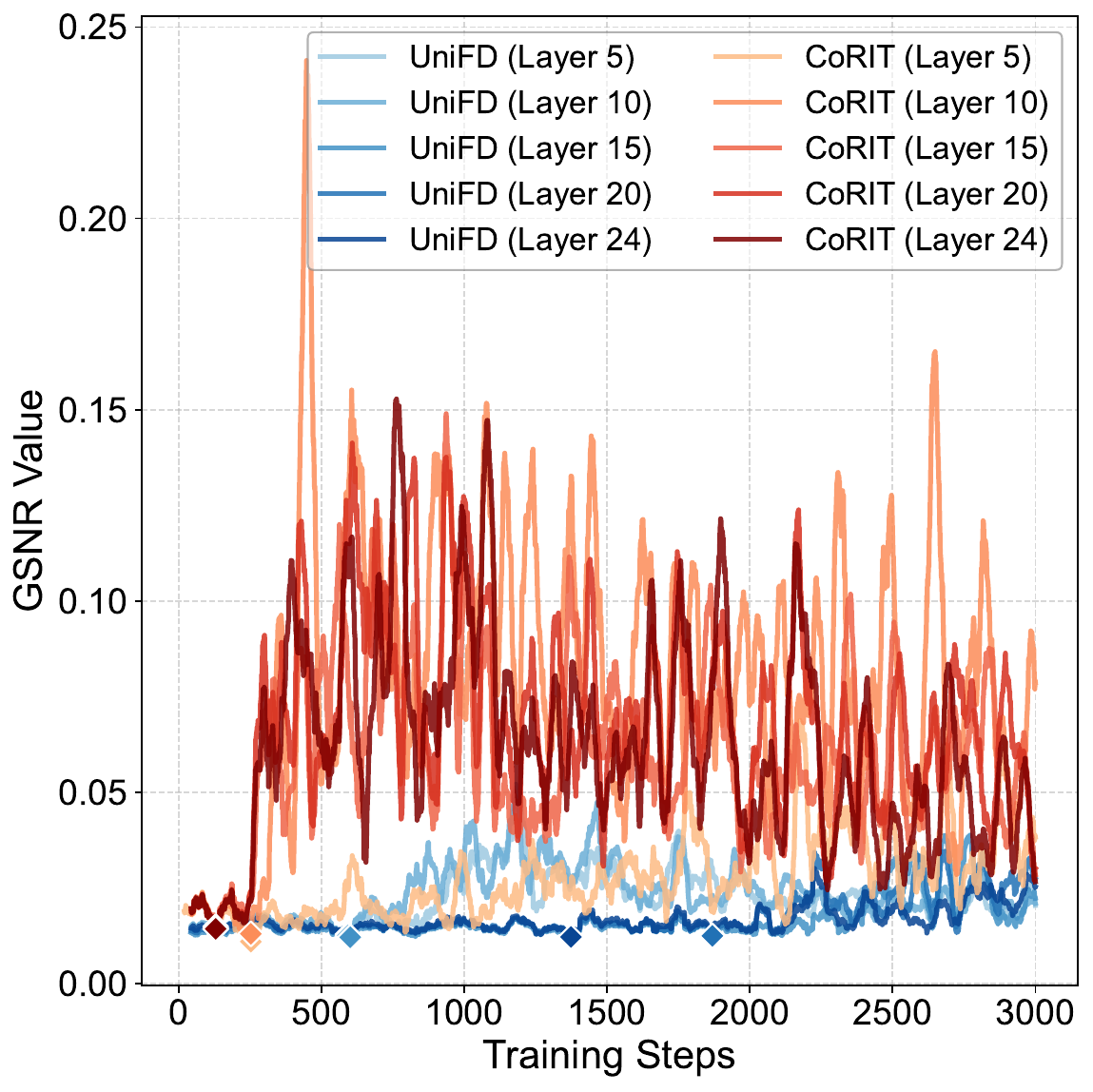}
    \caption{Comparison of layer-wise GSNR curves during training between UniFD and CoRIT on the NT dataset.}
    \label{fig:gsnr_comparison}
\end{figure}

\begin{figure}[t]
    \centering
    \includegraphics[width=0.45\textwidth]{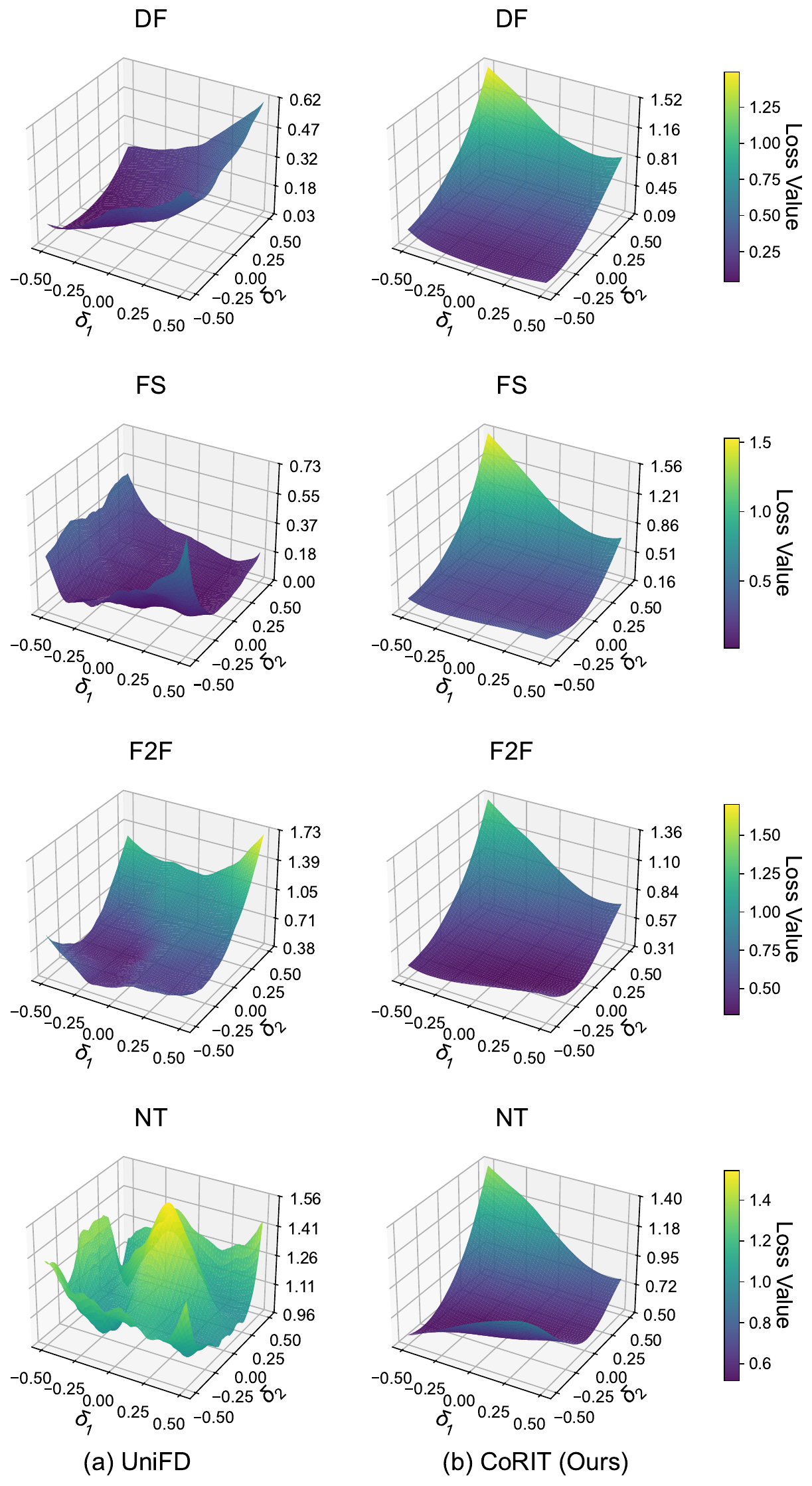}
    \caption{Comparison of loss landscapes between UniFD and the proposed CoRIT on the NT dataset.}
    \label{fig:lossland}
\end{figure}

\section{Experiments}
\subsection{Datasets and Evaluation Protocols}
\label{subsec:protocol}
To comprehensively assess the generalization of CoRIT across both classical facial forgery and emerging AI-generated content, we employ three diverse evaluation benchmarks.

\subsubsection{Classical Forensic Benchmarks.} We utilize the FaceForensics++ (FF++)~\cite{DBLP:conf/iccv/RosslerCVRTN19} dataset, which contains four manipulation types: Identity Replacement (DeepFakes, FaceSwap), Expression Reenactment (Face2Face), and Neural Rendering (NeuralTextures).
For cross-manipulation evaluation, the model is trained on three manipulation types and tested on the held-out type.
For cross-dataset evaluation, we train on FF++ and test on several widely used datasets: Deepfake Detection Challenge (DFDC)~\cite{deepfake-detection-challenge}, its preview version (DFDCP)~\cite{DBLP:journals/corr/abs-1910-08854}, DeepFake Detection (DFD)~\cite{DFD2019}, and both iterations of the Celeb-DF benchmark (CDF-v1 and CDF-v2)~\cite{DBLP:conf/cvpr/LiYSQL20}. 
All data preprocessing and evaluation protocols follow the standards of DeepfakeBench~\cite{DBLP:conf/nips/00020YLW23} to ensure reproducibility and fair comparison. We report Frame-level and Video-level AUC (Area Under the Curve) as the primary metrics.

\subsubsection{Next-Generation Generative Benchmarks.} To further test against rapidly evolving face forgery techniques, we incorporate the DF40 benchmark~\cite{DBLP:conf/nips/YanYCZFZLWDWY24}, a large-scale benchmark covering 40 distinct generation methods. Following prior work~\cite{DBLP:journals/corr/abs-2504-04827}, we select representative face forgery types and adopt Protocol-2 (Cross-Domain Evaluation) on the CDF domain. We report Frame-level AUC as the evaluation metric.

\subsubsection{UniversalFakeDetect Benchmarks.} Distinct from the face-centric evaluations above, we adopt the UniversalFakeDetect~\cite{DBLP:conf/cvpr/OjhaLL23} benchmark to evaluate on general natural scenes, which covers entire synthesized images paired with real images from LAION~\cite{DBLP:journals/corr/abs-2111-02114} and ImageNet~\cite{DBLP:journals/ijcv/RussakovskyDSKS15}.
Following standard protocols~\cite{DBLP:conf/cvpr/WangW0OE20,DBLP:conf/cvpr/OjhaLL23}, we train on the ForenSynths~\cite{DBLP:conf/cvpr/WangW0OE20} training set (ProGAN~\cite{DBLP:conf/iclr/KarrasALL18}). The test set comprises 13 generative models, spanning from GAN families (e.g., CycleGAN~\cite{DBLP:conf/iccv/ZhuPIE17}, StyleGAN~\cite{DBLP:journals/pami/KarrasLA21}, StarGAN~\cite{DBLP:conf/cvpr/ChoiCKH0C18}) to recent diffusion models (e.g., ADM~\cite{DBLP:conf/nips/DhariwalN21}, DALL-E~\cite{DBLP:conf/icml/RameshPGGVRCS21}, LDM~\cite{DBLP:conf/cvpr/RombachBLEO22}, Glide~\cite{DBLP:conf/icml/NicholDRSMMSC22}). We report mean Average Precision (mAP) and mean Accuracy (mAcc).

\subsubsection{Implementation Details.} 
Following~\cite{DBLP:conf/cvpr/OjhaLL23,DBLP:conf/cvpr/CuiLLZD25}, we employ the image encoder of CLIP-ViT-L/14 as our backbone. The network is optimized with Sharpness-Aware Minimization (SAM), setting the neighborhood size $\rho=0.07$, batch size to 20, and initial learning rate to $10^{-3}$. 
For the RRM (Section~\ref{subsec:rrm}), we set the coherence coefficient $\alpha=0.1$ and the number of regions $K=3$. 
For universal applicability, the three regions correspond to the foreground, foreground boundary, and background; for facial forgery tasks specifically, the foreground is the facial area.
For the HRI (Section~\ref{subsec:hri}), we set the intermediate fusion layer to $l_{\text{mid}}=18$.
All experiments are conducted on a single NVIDIA Tesla V100 GPU.

\subsection{Generalization Evaluation}
\label{sec:generalization_eval}

We evaluate the generalization of CoRIT to unseen domains through cross-dataset and cross-manipulation experiments on three benchmarks: Classical Forensic Benchmarks (Table~\ref{tab:ff_cross}, Table~\ref{tab:ff_cross_video}), Next-Generation Generative Benchmarks (Table~\ref{tab:df40_cross}), and UniversalFakeDetect Benchmarks (Table~\ref{tab:unidf_ap}, Table~\ref{tab:unidf_acc}). 
Without modifying any backbone parameters~\cite{DBLP:conf/cvpr/OjhaLL23}, CoRIT outperforms both Full Fine-Tuning (FFT) and Parameter-Efficient Fine-Tuning (PEFT) approaches.
This suggests that fully exploiting CLIP's intrinsic representation via theoretically motivated design can yield generalization highly competitive against parameter modification methods, while maintaining better stability and memory efficiency.

\subsubsection{Generalization on Classical Forensic Benchmarks}
In the frame-level evaluation on Classical Forensic Benchmarks (Table~\ref{tab:ff_cross}), CoRIT surpasses the state-of-the-art CNN-based FFT method FIA-USA across multiple unseen datasets, with a notable 9.2\% gain on DFD.
Among ViT-based approaches under the same backbone and SAM optimization settings, CoRIT outperforms its baseline UniFD, which simply employs a frozen backbone with a lightweight learnable classifier, by an average AUC margin of 12.0\%.
CoRIT also surpasses the adapter-based PEFT method ForAda by 5.1\%.
In the video-level evaluation (Table~\ref{tab:ff_cross_video}), CoRIT achieves a state-of-the-art average AUC of 91.0\%, outperforming the LoRA-based PEFT competitor Effort.

\subsubsection{Generalization to Emerging Forgeries and Universal Scenarios}
The generalization superiority of CoRIT extends to a broader spectrum of emerging facial forgery techniques and natural scene synthesis.
On the Next-Generation Generative Benchmarks (Table~\ref{tab:df40_cross}), CoRIT consistently outperforms the CNN-based FIA-USA by 4.4\% on average and improves over the frozen-backbone baseline UniFD by 11.6\%, confirming its robustness against rapidly evolving generation techniques.
On the UniversalFakeDetect Benchmarks (Table~\ref{tab:unidf_ap}, Table~\ref{tab:unidf_acc}), which span natural scenes from ImageNet and LAION, CoRIT achieves 97.17\% mAP. It surpasses the hand-crafted feature-based method NPR by 4.41\% in mAP (2.28\% in mAcc) and the frequency-aware FreqNet by 5.52\% in mAP (4.75\% in mAcc). 
Compared to UniFD, CoRIT delivers gains of 7.03\% in mAP and 8.46\% in mAcc. 
CoRIT is particularly effective on forgeries generated via perceptual loss, achieving near-perfect mAP and 97.35\% mAcc.
We attribute this to the layer-wise injection of high-GSNR features, which helps capture artifacts introduced by feature-matching optimization.
These results confirm the applicability of our approach beyond facial forgery to universal-purpose forgery detection.

\subsubsection{Cross-Manipulation Evaluation}
We further evaluate generalization to unseen manipulations on FaceForensics++ (FF++) using a leave-one-out protocol: training on three subsets and testing on the held-out one. The four manipulation types are Deepfakes (DF), Face2Face (F2F), FaceSwap (FS), and NeuralTextures (NT). 
As shown in Tab.~\ref{tab:cross_manipulation}, our CoRIT consistently outperforms both the frozen-backbone baseline UniFD and the PEFT method ForAda. The relatively lower performance on NT and F2F is expected, as the CLIP-pretrained backbone has limited sensitivity to non-semantic artifacts. Still, CoRIT improves over UniFD by 12.5\% on F2F and 8.2\% on NT, indicating that our design helps the backbone better capture manipulation traces beyond semantic-level cues.

\subsubsection{Generalization Stability}
CoRIT shows strong generalization stability during training. As shown in Fig.~\ref{fig:loss_compare}, competing methods exhibit a typical 'U-shaped' overfitting curve, whereas CoRIT maintains consistent performance under prolonged training. This indicates that our design encourages learning domain-invariant forensic representations, reducing the reliance on early stopping to avoid overfitting.

\subsubsection{Computational Efficiency}
CoRIT is highly memory-efficient during the training phase. Since the backbone remains entirely frozen, no gradient storage is needed for its parameters, reducing the overall GPU memory consumption from 21,224 MB (ForAda) to 2,130 MB, achieving a 90\% reduction.

\subsection{Theoretical Verification and Visualization} 
Guided by the theoretical analysis in Section~\ref{sec:theoretical_insights}, CoRIT consistently outperforms its baseline UniFD, as detailed in Section~\ref{sec:generalization_eval}. 
Here we provide empirical verification and visualization to support these theoretical claims.

\subsubsection{Layer-wise COR and GSNR Analysis on the NT Dataset}
We compare the layer-wise COR and Gradient Signal-to-Noise Ratio (GSNR) between UniFD and CoRIT on the NeuralTextures (NT) dataset, which is particularly challenging due to its non-semantic artifacts.
As shown in Tab.~\ref{tab:layer_cor}, UniFD exhibits a rapid COR decay, dropping below 0.05 by layer 15 and reaching 0.01 at layer 24, indicating a failure to retain generalizable non-semantic forgery patterns. CoRIT shows a distinct trend: COR values rise between layers 10 and 15, and although they decline at layer 20, the final-layer COR remains at 0.38. This ensures the COR remains well above the theoretical "Collapse Zone" (COR $< 0.05$), mitigating the Optimization Collapse.

The GSNR comparison, as illustrated in Fig.~\ref{fig:gsnr_comparison}, corroborates this finding. CoRIT maintains higher GSNR values and enters the optimization phase earlier across all layers. Following the analysis in Section~\ref{sec:theoretical_verification}, these results confirm that CoRIT strengthens the representation of non-semantic artifacts.

\subsubsection{Loss Landscape Visualization}
Fig.~\ref{fig:lossland} shows the loss landscapes of the final-layer (Layer 24) representations from UniFD and CoRIT, evaluated on a lightweight learnable classifier. 
CoRIT produces a flatter and smoother landscape, which is generally associated with better generalization~\cite{DBLP:conf/iclr/KeskarMNST17,DBLP:conf/iclr/ForetKMN21}.
The landscape for UniFD, particularly on the NT subset, is irregular with sharp curvature in multiple regions. 
As established in Lemma~\ref{lem:local_stability}, such high curvature leads to reduced COR values, consistent with the empirical results in Table~\ref{tab:layer_cor}.

\begin{table}[t]
    \centering
    \caption{Ablation study on proposed components (Frame-Level AUC (\%)). Best results are in \textbf{bold}.}
    \label{tab:ablation_study}
    
    \setlength{\tabcolsep}{4pt}
    
    \resizebox{\columnwidth}{!}{
        \begin{tabular}{ccc|cccc|c}
            \toprule
            RSI & RRM & HRI & CDFv2 & DFDC & facedancer & fsgan & Avg. \\
            \midrule
            - & - & - & 76.0 & 74.8 & 77.9 & 85.6 & 78.6 \\
            \checkmark & - & - & 86.9 & 81.6 & 79.2 & 87.3 & 83.8 \\
            \checkmark & \checkmark & - & 88.5 & 82.7 & 84.4 & 91.2 & 86.7 \\
            - & - & \checkmark & 77.4 & 79.4 & 82.6 & 89.9 & 82.3 \\
            \checkmark & \checkmark & \checkmark & \textbf{89.1} & \textbf{84.5} & \textbf{87.8} & \textbf{93.8} & \textbf{88.8} \\
            \bottomrule
        \end{tabular}
    }
\end{table}

\begin{table}[t]
    \centering
     \caption{Ablation study on spatial regions and pooling strategies (Frame-Level AUC (\%)). F, Bd, and Bg denote Foreground, Boundary, and Background, respectively. $K$ is the number of regions. Best results are in \textbf{bold}.}
    \label{tab:region_ablation}
    \setlength{\tabcolsep}{3pt} 
    \resizebox{0.48\textwidth}{!}{
        \begin{tabular}{lcc|ccccc|c}
            \toprule
            Region & $K$ & Pooling & CDFv1 & CDFv2 & DFDCP & DFDC & DFD & Avg. \\
            \midrule
            Random & 3 & Avg. & 76.1 & 79.7 & 80.9 & 78.6 & 88.0 & 80.7 \\
            F & 1 & Avg. & 88.6 & 86.6 & 84.9 & 82.5 & 91.2 & 86.8 \\
            Bd & 1 & Avg. & 76.4 & 77.3 & 78.1 & 78.2 & 87.4 & 79.5 \\
            Bg & 1 & Avg. & 67.1 & 61.7 & 70.5 & 70.0 & 77.0 & 69.3 \\
            F+Bd & 2 & Avg. & 89.3 & \textbf{89.1} & 88.5 & 83.7 & \textbf{91.3} & 88.4 \\
            F+Bg & 2 & Avg. & 89.9 & \textbf{89.1} & 87.3 & 83.2 & \textbf{91.3} & 88.2 \\
            Bd+Bg & 2 & Avg. & 77.8 & 81.2 & 79.8 & 78.0 & 88.0 & 81.0 \\
            F+Bd+Bg & 3 & Max & 83.4 & 83.6 & 84.5 & 79.7 & 89.5 & 84.1 \\
            \midrule
            F+Bd+Bg & 3 & Avg. & \textbf{90.9} & \textbf{89.1} & \textbf{88.8} & \textbf{84.5} & \textbf{91.3} & \textbf{88.9} \\
            \bottomrule
        \end{tabular}
    }
\end{table}

\begin{table}[t]
    \centering
    \caption{Ablation study across different backbones: IMG-L (ImageNet-ViT-Large), CLIP-B (CLIP-ViT-Base), and CLIP-L (CLIP-ViT-Large). Best results are in \textbf{bold}.}
    \label{tab:backbone_ablation}

    \setlength{\tabcolsep}{3.5pt}
    
    \resizebox{0.95\columnwidth}{!}{
        \begin{tabular}{l|l|cccc|c}
            \toprule
            Backbone & Method & CDFv1 & CDFv2 & DFDCP & DFDC & Avg. \\
            \midrule

            \multirow{2}{*}{IMG-L} 
                & UniFD~\cite{DBLP:conf/cvpr/OjhaLL23} & 53.0 & 54.0 & 63.2 & 61.4 & 57.9 \\
                & Ours & 66.3 & 65.0 & 75.0 & 67.6 & 68.5 \\
            \midrule
            \multirow{2}{*}{CLIP-B} 
                & UniFD~\cite{DBLP:conf/cvpr/OjhaLL23} & 75.0 & 73.3 & 72.4 & 70.0 & 72.7 \\
                & Ours & 81.0 & 77.2 & 80.8 & 76.5 & 78.9 \\
            \midrule
            \multirow{2}{*}{CLIP-L} 
                & UniFD~\cite{DBLP:conf/cvpr/OjhaLL23} & 71.2 & 76.0 & 73.6 & 74.8 & 73.9 \\
                & \textbf{Ours} & \textbf{90.9} & \textbf{89.1} & \textbf{88.8} & \textbf{84.5} & \textbf{88.3} \\
            \bottomrule
        \end{tabular}
    }
\end{table}

\begin{table}[t]
    \centering
    \caption{Sensitivity analysis of the coherence coefficient $\alpha$ on classical forensic benchmarks (Frame-Level AUC (\%)). Best results are in \textbf{bold}.}
    \label{tab:ablation_alpha}
    
    \setlength{\tabcolsep}{3pt} 
    \resizebox{0.4\textwidth}{!}{
        \begin{tabular}{c|ccccc|c}
            \toprule
            $\alpha$ & CDFv1 & CDFv2 & DFDCP & DFDC & DFD & Avg. \\
            \midrule
            0 & 88.6 & 88.5 & 89.1 & 84.1 & 91.2 & 88.3 \\

            0.1 & \textbf{90.9} & \textbf{89.1} & 88.8 & \textbf{84.5} & \textbf{91.3} & \textbf{88.9} \\
            0.5 & 89.1 & 88.7 & \textbf{89.3} & 84.3 & \textbf{91.3} & 88.5 \\
            0.9 & 88.9 & 88.7 & 89.1 & 84.2 & 91.1 & 88.4 \\
            1.3 & 87.9 & 86.4 & 85.3 & 82.3 & 90.1 & 86.4 \\
            \bottomrule
        \end{tabular}%
    }
\end{table}

\begin{table}[t]
    \centering
    \caption{Ablation study on the selection of intermediate layers for feature fusion. The index indicates the encoder layer used for extracting shallow features. Best results are in \textbf{bold}.}
    \label{tab:ablation_layers}

    \setlength{\tabcolsep}{3.5pt}
    
    \resizebox{0.8\columnwidth}{!}{
        \begin{tabular}{c|ccccc|c}
            \toprule
            Layer & CDFv1 & CDFv2 & DFDCP & DFDC & DFD & Avg. \\
            \midrule
            10 & 86.6 & 86.8 & 85.3 & 84.1 & 91.5 & 86.9 \\
            16 & 88.4 & 88.2 & 87.8 & 84.4 & \textbf{91.9} & 88.1 \\
            18 & \textbf{90.9} & \textbf{89.1} & \textbf{88.4} & 84.5 & 91.5 & \textbf{88.9} \\
            20 & 90.3 & 88.9 & 85.5 & \textbf{85.0} & 91.2 & 88.2 \\
            \bottomrule
        \end{tabular}
    }
\end{table}

\begin{table}[t]
    \centering
    \caption{Ablation study on fusion architecture. We compare feature concatenation (Concat) with a cross-attention mechanism (CrossAttn). Best results are in \textbf{bold}.}
    \label{tab:classifier_ablation}

    \setlength{\tabcolsep}{3.5pt}
    
    \resizebox{0.9\columnwidth}{!}{
        \begin{tabular}{l|ccccc|c}
            \toprule
            Type & CDFv1 & CDFv2 & DFDCP & DFDC & DFD & Avg. \\
            \midrule
            CrossAttn & \textbf{91.5} & 88.4 & 85.6 & 83.9 & 90.4 & 88.0 \\
            Concat & 90.9 & \textbf{89.1} & \textbf{88.4} & \textbf{84.5} & \textbf{91.5} & \textbf{88.9} \\
            \bottomrule
        \end{tabular}
    }
\end{table}

\subsection{Ablation Study}
\label{subsec:ablation}

We conduct a comprehensive ablation study to verify the effectiveness of each component.

\subsubsection{Effectiveness of Proposed Components} 
As shown in Table~\ref{tab:ablation_study}, we progressively validate the contribution of Regional Signal Injection (RSI), Region Refinement Mask (RRM), and Hierarchical Representation Integration (HRI). 
Compared to the baseline, the introduction of RSI yields a substantial initial gain of 5.2\%.
RRM brings an additional 2.9\% by filtering incoherent gradients. 
HRI contributes a further 2.1\%, resulting in a total improvement of 10.2\% over the baseline. 
When applied alone, HRI also improves the baseline by 3.7\%, verifying its independent efficacy. 
Each component thus plays a distinct and complementary role in improving forensic generalization.

\subsubsection{Ablation on Spatial Regions and Pooling} 
Table~\ref{tab:region_ablation} presents the ablation on region configurations, including the number of regions $K$ and pooling strategies. 
With a single region ($K=1$), the foreground captures the most discriminative cues at 86.8\% average AUC, which significantly outperforms random sampling. 
Two-region combinations ($K=2$), such as foreground paired with boundary, raise performance to 88.4\%. 
Using all three regions ($K=3$) brings further gains on CDFv1 and DFDC, yielding a total improvement of 2.1\% over foreground alone, which confirms the complementary value of boundary and background features. 
For pooling strategies, average pooling outperforms max pooling by 4.8\% under the full-context setting, suggesting it better suppresses domain-specific noise that hinders generalization.

\begin{figure*}[t]
    \centering
    \includegraphics[width=1\textwidth]{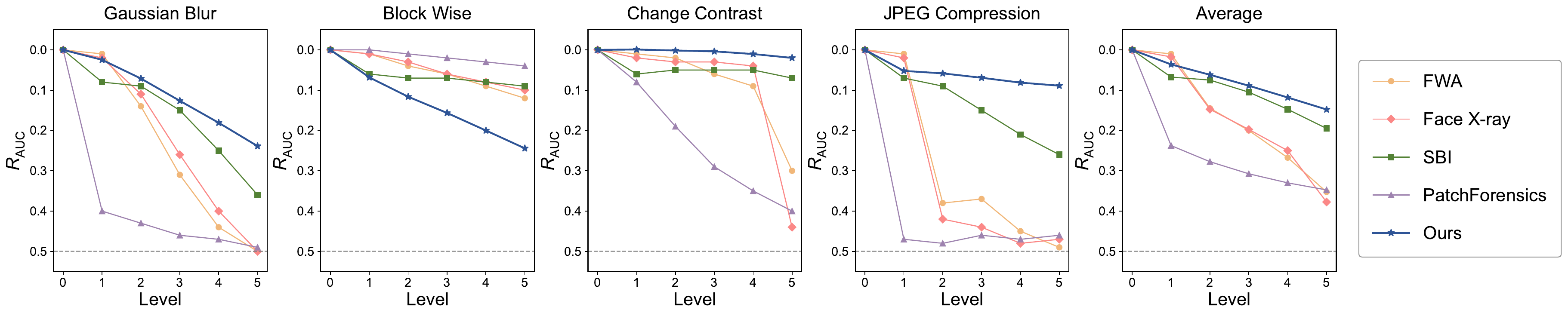}
    \caption{Robustness evaluation under different perturbation levels: Gaussian Blur, Block-wise Distortion, Contrast Change, and JPEG Compression. Relative AUC Variation curves show AUC degradation as perturbation severity increases.}
    \label{fig:robustness}
\end{figure*}

\subsection{Sensitivity Analysis} 

\subsubsection{Impact of Backbones} Table~\ref{tab:backbone_ablation} evaluates CoRIT across backbone scales and pre-training paradigms, including ImageNet~\cite{deng2009imagenet} and CLIP. CoRIT consistently outperforms UniFD under all backbones, with a 6.2\% gain on CLIP-pretrained ViT-Base and 10.6\% on ImageNet-pretrained ViT-L. 
These results show that our framework is robust to model scale and extends beyond multi-modal architectures to benefit backbones pre-trained for semantic classification. We adopt CLIP-ViT-L as the default backbone given its best overall performance.

\subsubsection{Impact of Mask Threshold}
Tab.~\ref{tab:ablation_alpha} shows that performance remains stable across the range $\alpha \in [0, 0.9]$, demonstrating the model's robustness to this hyperparameter. We adopt $\alpha=0.1$ for all experiments.

\subsubsection{Impact of Fusion Layers}
Tab.~\ref{tab:ablation_layers} shows that performance remains stable across the range $l_{mid} \in [16, 20]$, indicating low sensitivity to the choice of fusion layer. We set $l_{mid}=18$ for all experiments.

\subsubsection{Impact of Fusion Strategy} 
We evaluate the sensitivity of feature fusion in Tab.~\ref{tab:classifier_ablation} by comparing the Concatenation ($\mathbf{F}_{\text{cat}} = [\mathbf{F}^{(l_{\text{mid}})} \parallel \mathbf{F}^{(L)}]$) with a Cross-Attention mechanism:
\begin{equation}
    \mathbf{F}_{\text{attn}} = \mathbf{F}^{(l_{\text{mid}})} + \text{CrossAttn}(\mathbf{F}^{(L)}, \mathbf{F}^{(l_{\text{mid}})}, \mathbf{F}^{(l_{\text{mid}})}),
\end{equation}
where $\mathbf{F}^{(L)}$ serves as the query to extract details from $\mathbf{F}^{(l_{\text{mid}})}$. The two strategies differ by only 0.9\%, confirming that our method is insensitive to classifier design. We therefore adopt the more efficient Concatenation for all experiments.

\subsection{Robustness Analysis}
To comprehensively assess the stability of our detector against common image degradations, we introduce four types of random perturbations: Gaussian Blur, Block-wise Distortion, Contrast Change, and JPEG Compression. We quantify the impact of these disturbances using the Relative AUC Variation ($R_{\text{AUC}}$), defined as:
\begin{equation}
    R_{\text{AUC}} = \frac{\text{AUC}_{\text{ptb}} - \text{AUC}_{\text{raw}}}{\text{AUC}_{\text{raw}}},
\end{equation}
where $\text{AUC}_{\text{raw}}$ and $\text{AUC}_{\text{ptb}}$ denote performance on the original and perturbed datasets, respectively. $R_{\text{AUC}}$ Values closer to zero indicate higher robustness. 

As shown in Fig.~\ref{fig:robustness}, all methods degrade as perturbation severity increases, but CoRIT exhibits the slowest decay rate on average, particularly under heavy JPEG Compression (levels 1--5), which confirms the effectiveness of our noise filtering mechanism. One exception is Block-wise Distortion, where CoRIT shows higher sensitivity. We attribute this to CLIP's intrinsic prior on semantic integrity. 
Unlike local noise, block-wise occlusion disrupts the semantic integrity of the image.
Masking key regions introduces ambiguity that itself resembles a form of manipulation. We therefore consider this sensitivity reasonable behavior for deepfake detectors.

\section{Conclusion and Discussion}

\subsection{Conclusion}
This work reveals that Optimization Collapse in CLIP-based forensics is not merely a geometric instability induced by gradient perturbations, but a direct manifestation of degraded GSNR arising from insufficient intrinsic generalizability. 
This degradation is inherent to semantic-centric backbones, whose representations systematically attenuate the non-semantic forensic artifacts essential to detecting hyper-realistic forgeries.
The COR-GSNR Stability Decomposition theorem formalizes this connection, establishing that the vanishing of COR implies the degradation of GSNR, thereby bridging optimization geometry with generalization potential.
Built on this theoretical foundation, CoRIT actively counteracts the layer-wise attenuation of non-semantic signals and achieves state-of-the-art generalization across comprehensive benchmarks.
These results demonstrate that enhancing gradient signal fidelity can mitigate Optimization Collapse and unlock the forensic potential of frozen semantic-centric backbones.

\subsection{Discussion}
Our theoretical findings extend beyond the proposed method, offering a quantitative lens for evaluating forensic benchmarks themselves.
As shown in Fig.~\ref{fig:oc_visual}(a), current benchmarks are predominantly populated by high-COR forgery types such as Celeb-DF and uniface, where semantic artifacts remain readily exploitable.
In contrast, low-COR forgeries like NeuralTextures and Face2Face, whose artifacts reside almost entirely in the non-semantic domain, pose substantially greater challenges yet remain underrepresented.
This imbalance suggests that COR and GSNR can serve as principled, representation-aware metrics to guide future benchmark construction.
More broadly, as generative models continue to advance toward photorealistic synthesis, the proportion of low-COR forgeries will inevitably grow.
Prioritizing such samples in benchmark design would more rigorously stress-test next-generation detectors, ensuring that progress in forensic generalization keeps pace with the relentless evolution of synthesis fidelity.

\bibliographystyle{IEEEtran}
\bibliography{main_references}

\newpage
\section{Theoretical Derivations and Supporting Analysis}
\label{supplementary:supplementary_A}

This supplementary collects the detailed mathematical derivations and supporting analyses underlying the theoretical framework presented in the main text. The material is organized as follows.

\begin{itemize}
    \item Section~\ref{subsec:hessian_decomposition} derives the Hessian-Covariance decomposition, establishing the structural decomposition of the expected Hessian matrix $\mathbf{H}_t$ into the gradient covariance $\operatorname{Cov}[\mathbf{g}_t]$, the outer product of the expected gradient, and the model misspecification residual $\boldsymbol{\Xi}_t$.
    \item Section~\ref{sec:xi} proves that the COR-GSNR Stability Decomposition in Theorem~1 of the main text is well-posed throughout the entire optimization trajectory, and discusses the asymptotic vanishing of $\boldsymbol{\Xi}_t$ near convergence.
    \item Section~\ref{app:justification_curvature} provides a rigorous justification for approximating the local smoothness constant $L_t$ by $\lambda_{\max}(\mathbf{H}_t)$, drawing on recent theoretical results for Sharpness-Aware Minimization.
    \item Section~\ref{sec:cor_monotonicity} establishes the strict monotonic dependence of $\rho_{\mathrm{critical}}$ on $\mathrm{GSNR}_{t^*}$.
\end{itemize}

\vspace{0.5em}
\noindent For ease of reference, Table~\ref{tab:notation} summarises the principal symbols used throughout this supplementary.

\begin{table}[h]
\centering
\caption{Summary of principal notation.}
\label{tab:notation}
\renewcommand{\arraystretch}{1.25}
\begin{tabular}{@{}c p{7.2cm}@{}}
\toprule
Symbol & Definition \\
\midrule
\multicolumn{2}{@{}l}{\textit{Optimization \& Loss}} \\[2pt]
$\mathbf{w}_t$ & Model parameters at optimization step $t$ \\
$\mathcal{L}_t$ & Expected NLL loss $\mathbb{E}_{\mathbf{x}\sim q}[-\log p_{\mathbf{w}_t}(\mathbf{x})]$ \\
$\mathbf{g}(\mathbf{x},\mathbf{w}_t)$ & Stochastic gradient for a single sample $\mathbf{x}$ \\
$\nabla\mathcal{L}_t$ & Expected (population) gradient \\
$\mathbf{H}_t$ & Expected Hessian matrix $\nabla^2\mathcal{L}(\mathbf{w}_t)$ \\
$L_t$ & Local smoothness constant within the $\rho$-neighborhood \\
$t^*$ & Bottleneck optimization step \\
\midrule
\multicolumn{2}{@{}l}{\textit{Hessian Decomposition}} \\[2pt]
$\operatorname{Cov}[\mathbf{g}_t]$ & Gradient covariance matrix \\
$\boldsymbol{\Xi}_t$ & Model misspecification residual $\mathbb{E}_{q}\!\bigl[\nabla^2 p_{\mathbf{w}_t}/p_{\mathbf{w}_t}\bigr]$ \\
$\kappa_s(\mathbf{H}_t)$ & Stable rank $\operatorname{Tr}(\mathbf{H}_t)/\lambda_{\max}(\mathbf{H}_t)$ \\
$\lambda_{\max}(\mathbf{H}_t)$ & Largest eigenvalue of $\mathbf{H}_t$ \\
\midrule
\multicolumn{2}{@{}l}{\textit{Core Indicators}} \\[2pt]
$\mathrm{GSNR}_{t^*}$ & Gradient Signal-to-Noise Ratio at $t^*$: $\|\nabla\mathcal{L}_{t^*}\|^2 / \operatorname{Tr}(\operatorname{Cov}[\mathbf{g}_{t^*}])$ \\
$\rho_{\mathrm{critical}}$ & Critical Optimization Radius \\
\midrule
\multicolumn{2}{@{}l}{\textit{Distributions}} \\[2pt]
$q(\mathbf{x})$ & True data distribution \\
$p_{\mathbf{w}_t}(\mathbf{x})$ & Parameterised model distribution at step $t$ \\
\bottomrule
\end{tabular}
\end{table}

\subsection{Decomposition of the Hessian Matrix}
\label{subsec:hessian_decomposition}

Let $\mathbf{x}$ be the input data drawn from the true data distribution $q(\mathbf{x})$, and let $p_{\mathbf{w}_t}(\mathbf{x})$ denote the parameterized model distribution at optimization step $t$. Under the standard classification setting, the objective function $\mathcal{L}(\mathbf{w}_t)$ is the expected Negative Log-Likelihood (NLL) loss:
\begin{equation}
    \mathcal{L}_t \triangleq \mathbb{E}_{\mathbf{x} \sim q(\mathbf{x})}[-\log p_{\mathbf{w}_t}(\mathbf{x})].
\end{equation}

The stochastic gradient for a single sample $\mathbf{x}$, denoted by $\mathbf{g}(\mathbf{x}, \mathbf{w}_t)$, is computed as:
\begin{equation}
    \mathbf{g}(\mathbf{x}, \mathbf{w}_t) = -\nabla_{\mathbf{w}_t} \log p_{\mathbf{w}_t}(\mathbf{x}) = -\frac{\nabla_{\mathbf{w}_t} p_{\mathbf{w}_t}(\mathbf{x})}{p_{\mathbf{w}_t}(\mathbf{x})}.
    \label{eq:app_grad}
\end{equation}
The expected global gradient is therefore $\nabla \mathcal{L}_t = \mathbb{E}_{\mathbf{x} \sim q(\mathbf{x})}[\mathbf{g}(\mathbf{x}, \mathbf{w}_t)]$.

Next, we compute the Hessian matrix for a single sample by differentiating the gradient in \eqref{eq:app_grad} with respect to $\mathbf{w}_t$. Applying the quotient rule yields:
\begin{equation}
\begin{aligned}
    \mathbf{H}(\mathbf{x}, \mathbf{w}_t) 
    &= -\nabla_{\mathbf{w}_t}^2 \log p_{\mathbf{w}_t}(\mathbf{x}) \\
    &= -\nabla_{\mathbf{w}_t} \left( \frac{\nabla_{\mathbf{w}_t} p_{\mathbf{w}_t}(\mathbf{x})}{p_{\mathbf{w}_t}(\mathbf{x})} \right) \\
    &= -\frac{\nabla_{\mathbf{w}_t}^2 p_{\mathbf{w}_t}(\mathbf{x}) \cdot p_{\mathbf{w}_t}(\mathbf{x}) - \nabla_{\mathbf{w}_t} p_{\mathbf{w}_t}(\mathbf{x}) \nabla_{\mathbf{w}_t} p_{\mathbf{w}_t}(\mathbf{x})^\top}{(p_{\mathbf{w}_t}(\mathbf{x}))^2} \\
    &= \left( \frac{\nabla_{\mathbf{w}_t} p_{\mathbf{w}_t}(\mathbf{x})}{p_{\mathbf{w}_t}(\mathbf{x})} \right) \left( \frac{\nabla_{\mathbf{w}_t} p_{\mathbf{w}_t}(\mathbf{x})}{p_{\mathbf{w}_t}(\mathbf{x})} \right)^\top - \frac{\nabla_{\mathbf{w}_t}^2 p_{\mathbf{w}_t}(\mathbf{x})}{p_{\mathbf{w}_t}(\mathbf{x})} \\
    &= \mathbf{g}(\mathbf{x}, \mathbf{w}_t) \mathbf{g}(\mathbf{x}, \mathbf{w}_t)^\top - \frac{\nabla_{\mathbf{w}_t}^2 p_{\mathbf{w}_t}(\mathbf{x})}{p_{\mathbf{w}_t}(\mathbf{x})}.
\end{aligned}
\end{equation}

Taking the expectation over the true data distribution $q(\mathbf{x})$, the expected Hessian matrix $\mathbf{H}_t$ is formulated as:
\begin{equation}
    \mathbf{H}_t = \mathbb{E}_{\mathbf{x} \sim q(\mathbf{x})} [\mathbf{g}(\mathbf{x}, \mathbf{w}_t) \mathbf{g}(\mathbf{x}, \mathbf{w}_t)^\top] - \mathbb{E}_{\mathbf{x} \sim q(\mathbf{x})} \left[ \frac{\nabla_{\mathbf{w}_t}^2 p_{\mathbf{w}_t}(\mathbf{x})}{p_{\mathbf{w}_t}(\mathbf{x})} \right].
    \label{eq:app_expected_hessian}
\end{equation}

Recall that the uncentered second moment of the gradient relates to the gradient covariance matrix $\operatorname{Cov}[\mathbf{g}_t]$ and the expected gradient $\nabla \mathcal{L}_t$ via the standard identity:
\begin{equation}
    \mathbb{E}_{\mathbf{x} \sim q(\mathbf{x})} [\mathbf{g}(\mathbf{x}, \mathbf{w}_t) \mathbf{g}(\mathbf{x}, \mathbf{w}_t)^\top] = \operatorname{Cov}[\mathbf{g}_t] + \nabla \mathcal{L}_t \nabla \mathcal{L}_t^\top.
    \label{eq:app_cov}
\end{equation}

Substituting \eqref{eq:app_cov} into \eqref{eq:app_expected_hessian}, we obtain the precise structural decomposition of the Hessian:
\begin{equation}
    \mathbf{H}_t = \operatorname{Cov}[\mathbf{g}_t] + \nabla \mathcal{L}_t \nabla \mathcal{L}_t^\top - \boldsymbol{\Xi}_t,
\end{equation}
where $\boldsymbol{\Xi}_t \triangleq \mathbb{E}_{\mathbf{x} \sim q(\mathbf{x})} \left[ \frac{\nabla_{\mathbf{w}_t}^2 p_{\mathbf{w}_t}(\mathbf{x})}{p_{\mathbf{w}_t}(\mathbf{x})} \right]$ defines the residual matrix capturing the model misspecification.

\subsection{Well-Posedness of the Residual Matrix $\boldsymbol{\Xi}_t$}
\label{sec:xi}

We establish that the COR formula in Theorem~1 of the main text is always real-valued, regardless of the sign of $\operatorname{Tr}(\boldsymbol{\Xi}_{t^*})$.

\begin{lemma}[Well-Posedness of the Misspecification Term]
\label{lem:xi_wellposed}
Under the cross-entropy loss with $\operatorname{Tr}(\mathbf{H}_{t^*}) > 0$, the misspecification term in Theorem~1 of the main text satisfies:
\begin{equation}
    1 + \frac{\operatorname{Tr}(\boldsymbol{\Xi}_{t^*})}{\operatorname{Tr}(\mathbf{H}_{t^*})} \geq 0,
\end{equation}
thereby guaranteeing that the square root in~(13) of the main text is real-valued.
\end{lemma}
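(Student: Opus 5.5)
The plan is to use the trace identity from the Hessian-Covariance decomposition, which was already obtained in the proof of Theorem~\ref{them:gsnr_bound}. Taking the trace of \eqref{eq:hessian_identity} at $t^*$ and rearranging gives
\begin{equation*}
\operatorname{Tr}(\mathbf{H}_{t^*}) + \operatorname{Tr}(\boldsymbol{\Xi}_{t^*}) = \operatorname{Tr}(\operatorname{Cov}[\mathbf{g}_{t^*}]) + \|\nabla \mathcal{L}_{t^*}\|^2 .
\end{equation*}
Equivalently, by \eqref{eq:app_cov}, the left-hand side equals $\operatorname{Tr}\bigl(\mathbb{E}_{\mathbf{x}\sim q}[\mathbf{g}\mathbf{g}^\top]\bigr) = \mathbb{E}_{\mathbf{x}\sim q}\|\mathbf{g}(\mathbf{x},\mathbf{w}_{t^*})\|^2$.

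The key observation is that this quantity is manifestly nonnegative. A covariance matrix is positive semidefinite, so its trace is $\ge 0$. A squared Euclidean norm is also $\ge 0$. Hence $\operatorname{Tr}(\mathbf{H}_{t^*}) + \operatorname{Tr}(\boldsymbol{\Xi}_{t^*}) \ge 0$.

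Next, divide by $\operatorname{Tr}(\mathbf{H}_{t^*})$, which is strictly positive by hypothesis, so the inequality direction is preserved. This yields $1 + \operatorname{Tr}(\boldsymbol{\Xi}_{t^*})/\operatorname{Tr}(\mathbf{H}_{t^*}) \ge 0$, and the square root in \eqref{eq:refined_bound} is therefore real. In the same spirit I would remark that the expression under the full square root in the theorem's proof, $\frac{\operatorname{Tr}(\mathbf{H})+\operatorname{Tr}(\boldsymbol{\Xi})}{1+\mathrm{GSNR}}\cdot\mathrm{GSNR}$, is nonnegative as well, because $\mathrm{GSNR}\ge 0$. The statement holds regardless of the sign of $\operatorname{Tr}(\boldsymbol{\Xi}_{t^*})$: a negative residual is simply dominated by $\operatorname{Tr}(\mathbf{H}_{t^*})$.

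The main obstacle is making sure the decomposition is legitimately applicable under cross-entropy. That requires checking the following:
\begin{itemize}
\item per-sample gradients are square-integrable under $q$, so that $\operatorname{Cov}$ and the second moment are finite;
\item $p_{\mathbf{w}}(\mathbf{x})>0$ (automatic for softmax outputs), so that $\nabla^2 p/p$ is well defined;
\item differentiation and expectation can be interchanged.
\end{itemize}
For the softmax/CE setting with bounded features, which is the case for the frozen CLIP features and linear head here, I would verify these conditions by noting that per-sample gradients are bounded. That makes all the expectations finite and justifies the trace identity used above.
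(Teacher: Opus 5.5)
Your proof is correct and follows essentially the same route as the paper: take the trace of the Hessian--covariance decomposition, use $\operatorname{Tr}(\operatorname{Cov}[\mathbf{g}_{t^*}]) \ge 0$ to get $\operatorname{Tr}(\mathbf{H}_{t^*}) + \operatorname{Tr}(\boldsymbol{\Xi}_{t^*}) \ge \|\nabla\mathcal{L}_{t^*}\|^2 \ge 0$ (your $\mathbb{E}\|\mathbf{g}\|^2$ identity is the same fact), then divide by $\operatorname{Tr}(\mathbf{H}_{t^*}) > 0$. The regularity checks you add (square-integrable gradients, $p_{\mathbf{w}} > 0$, interchanging differentiation and expectation) do not appear in the paper, which takes the decomposition as given, but they are harmless and make the argument more careful.
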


\begin{proof}
Taking the trace of the Hessian decomposition~(5) of the main text yields:
\begin{equation}
    \operatorname{Tr}(\operatorname{Cov}[\mathbf{g}_{t^*}]) 
    = \operatorname{Tr}(\mathbf{H}_{t^*}) + \operatorname{Tr}(\boldsymbol{\Xi}_{t^*}) 
      - \|\nabla\mathcal{L}_{t^*}\|^2.
\end{equation}
Since $\operatorname{Cov}[\mathbf{g}_{t^*}]$ is positive semi-definite, $\operatorname{Tr}(\operatorname{Cov}[\mathbf{g}_{t^*}]) \geq 0$. Rearranging directly yields:
\begin{equation}
    \operatorname{Tr}(\mathbf{H}_{t^*}) + \operatorname{Tr}(\boldsymbol{\Xi}_{t^*}) 
    \geq \|\nabla\mathcal{L}_{t^*}\|^2 \geq 0.
\end{equation}
Dividing both sides by $\operatorname{Tr}(\mathbf{H}_{t^*}) > 0$ gives the desired result. This guarantee follows purely from the positive semi-definiteness of the covariance matrix, without any assumption on the sign of $\operatorname{Tr}(\boldsymbol{\Xi}_{t^*})$ itself.
\end{proof}

\begin{remark}[Behavior across optimization regimes]
\label{rem:xi_regimes}
Under standard regularity conditions, $\boldsymbol{\Xi}_t \to \mathbf{0}$ as $p_{\mathbf{w}_t} \to q$, since $\operatorname{Tr}(\boldsymbol{\Xi}_t) = \int q(\mathbf{x}) \, \nabla_{\mathbf{w}_t}^2 p_{\mathbf{w}_t}(\mathbf{x}) / p_{\mathbf{w}_t}(\mathbf{x}) \, d\mathbf{x}$ reduces to $\nabla_{\mathbf{w}_t}^2 \!\int p_{\mathbf{w}_t}(\mathbf{x}) \, d\mathbf{x} = \mathbf{0}$ when the model perfectly fits the data. This asymptotic vanishing is precisely \emph{not} the regime of interest: Optimization Collapse occurs during early optimization (Phase~(i) in Section~III-F of the main text), where $p_{\mathbf{w}_t}$ diverges significantly from $q$ and $\boldsymbol{\Xi}_t$ remains non-negligible. Crucially, however, Lemma~\ref{lem:xi_wellposed} ensures the well-posedness of the COR decomposition throughout the entire optimization trajectory, including this pre-convergence regime.
\end{remark}

\subsection{Theoretical Justification for Local Curvature Assumption}
\label{app:justification_curvature}

In our theoretical framework, the local smoothness constant $L_t$ within the $\rho$-neighborhood is bounded using the maximum eigenvalue of the Hessian at the center, i.e., $L_t \approx \lambda_{\max}(\mathbf{H}_t)$. This formulation is rigorously grounded in recent theoretical treatments of Sharpness-Aware Minimization (SAM).

Following Long and Bartlett~\cite{DBLP:journals/jmlr/LongB24}, the loss landscape within a local neighborhood can be characterized by a local quadratic approximation. For any weight vector $\mathbf{w}$ near $\mathbf{w}_t$, the loss is approximated as:
\begin{equation}
    \mathcal{L}(\mathbf{w}) \approx \mathcal{L}(\mathbf{w}_t) + \nabla \mathcal{L}(\mathbf{w}_t)^\top(\mathbf{w} - \mathbf{w}_t) + \frac{1}{2}(\mathbf{w} - \mathbf{w}_t)^\top \mathbf{H}_t (\mathbf{w} - \mathbf{w}_t),
\end{equation}
where $\mathbf{H}_t = \nabla^2 \mathcal{L}(\mathbf{w}_t)$. The local optimization dynamics and the bounds for the edge of stability in SAM are fundamentally governed by the operator norm of the Hessian, $\|\mathbf{H}_t\|_{\mathrm{op}}$~\cite{DBLP:journals/jmlr/LongB24}. 
Since the operator norm equals the largest eigenvalue in absolute value, i.e., $\|\mathbf{H}_t\|_{\mathrm{op}} = \max\{|\lambda_{\max}(\mathbf{H}_t)|, |\lambda_{\min}(\mathbf{H}_t)|\}$, this approximation holds when the magnitude of the most negative eigenvalue is dominated by $\lambda_{\max}(\mathbf{H}_t)$, a condition broadly observed in practice after the initial training phase~\cite{sagun2017empirical,ghorbani2019investigation}.
\begin{equation}
    \|\mathbf{H}_t\|_{\mathrm{op}} \approx \lambda_{\max}(\mathbf{H}_t).
\end{equation}

To mathematically justify extending this pointwise metric to the supremum over the entire neighborhood $\mathcal{N}(\mathbf{w}_t, \rho)$, we rely on the framework established by Bartlett et al.~\cite{DBLP:journals/jmlr/BartlettLB23}. By assuming that the third-order derivative tensor $D^3 \mathcal{L}$ is $B$-Lipschitz continuous with respect to the operator norm, the variation of the Hessian along any perturbation vector $\boldsymbol{\epsilon}$ (with $\|\boldsymbol{\epsilon}\|_2 \le \rho$) can be evaluated via the fundamental theorem of calculus:
\begin{equation}
    \nabla^2 \mathcal{L}(\mathbf{w}_t + \boldsymbol{\epsilon}) = \mathbf{H}_t + \int_{0}^{1} D^3 \mathcal{L}(\mathbf{w}_t + x\boldsymbol{\epsilon})(\boldsymbol{\epsilon}, \cdot, \cdot) \, dx.
\end{equation}
Applying the $B$-Lipschitz property bounds the spectral norm for any perturbed weight along this path~\cite{DBLP:journals/jmlr/BartlettLB23}:
\begin{equation}
    \|\nabla^2 \mathcal{L}(\mathbf{w}_t + \boldsymbol{\epsilon})\|_{\mathrm{op}} \le \lambda_{\max}(\mathbf{H}_t) + \mathcal{O}(\rho).
\end{equation}
Under the Optimization Collapse phenomenon, where $\rho$ is sufficiently small, the error term $\mathcal{O}(\rho)$ becomes vanishingly small. Therefore, taking the supremum over the neighborhood yields:
\begin{equation}
    L_t = \sup_{\|\boldsymbol{\epsilon}\|_2 \le \rho} \|\nabla^2 \mathcal{L}(\mathbf{w}_t + \boldsymbol{\epsilon})\|_{\mathrm{op}} \approx \lambda_{\max}(\mathbf{H}_t).
\end{equation}
This derivation guarantees that $\lambda_{\max}(\mathbf{H}_t)$ serves as a theoretically sound and tight proxy for the local smoothness constant $L_t$ of the population loss. The extension to mini-batch gradients follows naturally: if each per-sample loss $\ell_i(\mathbf{w})$ satisfies $L_t$-smoothness within $\mathcal{N}(\mathbf{w}_t, \rho)$, then the mini-batch loss $\mathcal{L}_{\mathcal{B}_t} = \frac{1}{|\mathcal{B}_t|}\sum_{i \in \mathcal{B}_t} \ell_i$ inherits the same constant, since for any $\|\boldsymbol{\epsilon}\| \le \rho$:
\begin{align}
    &\|\nabla \mathcal{L}_{\mathcal{B}_t}(\mathbf{w}_t + \boldsymbol{\epsilon}) - \nabla \mathcal{L}_{\mathcal{B}_t}(\mathbf{w}_t)\| \notag \\
    &\quad \le \frac{1}{|\mathcal{B}_t|}\sum_{i \in \mathcal{B}_t} \|\nabla \ell_i(\mathbf{w}_t + \boldsymbol{\epsilon}) - \nabla \ell_i(\mathbf{w}_t)\| \notag \\
    &\quad \le L_t \|\boldsymbol{\epsilon}\|.
\end{align}
This per-sample smoothness assumption is standard in SAM analysis~\cite{DBLP:conf/iclr/ForetKMN21, DBLP:journals/jmlr/BartlettLB23, DBLP:journals/jmlr/LongB24}.

\subsection{Monotonic Dependence of the Critical Optimization Radius on the Gradient Signal-to-Noise Ratio}
\label{sec:cor_monotonicity}

The following corollary derives two quantitative consequences of Theorem~1 of the main text that directly motivate the design principle in Section~III-G. We first introduce a regularity assumption on the non-statistical components of the decomposition.

\begin{assumption}[Regularity of Non-Statistical Terms]
\label{assum:regularity}
At the bottleneck step $t^*$, the geometric term and the misspecification term in Theorem~1 are bounded away from zero. That is, there exist constants $c_g, c_m > 0$ such that:
\begin{equation}
    \frac{\kappa_s(\mathbf{H}_{t^*})}{\sqrt{\operatorname{Tr}(\mathbf{H}_{t^*})}} 
    \geq c_g
    \quad \text{and} \quad
    \sqrt{1 + \frac{\operatorname{Tr}(\boldsymbol{\Xi}_{t^*})}
                   {\operatorname{Tr}(\mathbf{H}_{t^*})}} 
    \geq c_m.
    \label{eq:regularity}
\end{equation}
\end{assumption}

\noindent\textbf{Justification.} 
Lemma~\ref{lem:xi_wellposed} establishes the non-negativity $1 + \operatorname{Tr}(\boldsymbol{\Xi}_{t^*})/\operatorname{Tr}(\mathbf{H}_{t^*}) \geq 0$. The strict positivity $c_m > 0$ additionally excludes the degenerate case $\operatorname{Tr}(\operatorname{Cov}[\mathbf{g}_{t^*}]) = \|\nabla\mathcal{L}_{t^*}\|^2$, which would imply zero gradient variance in all but one direction and is not observed in practice. The lower bound $c_g > 0$ requires only that the loss landscape is not identically flat, i.e., $\kappa_s(\mathbf{H}_{t^*}) > 0$ and $\operatorname{Tr}(\mathbf{H}_{t^*}) < \infty$, both of which hold throughout the active optimization phase.

\begin{corollary}[COR Monotonicity and Collapse Equivalence]
\label{cor:monotonicity}
Under Assumption~\ref{assum:regularity}, the following two statements hold.

\noindent(i) Strict Monotonicity.
At a given bottleneck step $t^*$, with the geometric and misspecification terms held fixed, the Critical Optimization Radius $\rho_{\mathrm{critical}}$ is a strictly monotonically increasing function of $\mathrm{GSNR}_{t^*}$:
\begin{equation}
    \frac{\partial\, \rho_{\mathrm{critical}}}
         {\partial\, \mathrm{GSNR}_{t^*}} > 0 
    \qquad \forall\; \mathrm{GSNR}_{t^*} > 0.
    \label{eq:monotonicity}
\end{equation}

\noindent(ii) Collapse Equivalence.
The following biconditional equivalence holds:
\begin{equation}
    \rho_{\mathrm{critical}} \to 0 
    \quad \Longleftrightarrow \quad 
    \mathrm{GSNR}_{t^*} \to 0.
    \label{eq:equivalence}
\end{equation}
Moreover, in the regime $\mathrm{GSNR}_{t^*} \ll 1$, the COR satisfies the first-order approximation:
\begin{equation}
    \rho_{\mathrm{critical}} 
    \;\approx\; C \cdot \sqrt{\mathrm{GSNR}_{t^*}},
    \label{eq:collapse_approx}
\end{equation}
where $C > 0$ is a positive constant determined by the geometric and misspecification terms at $t^*$.
\end{corollary}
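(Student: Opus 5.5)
The plan is to work directly from the factorized form in Theorem~\ref{them:gsnr_bound}. Write $\rho_{\mathrm{critical}} = G\cdot M\cdot S(\gamma)$ with $\gamma \triangleq \mathrm{GSNR}_{t^*}$, where $G$ is the geometric term, $M$ the misspecification term, and $S(\gamma)=\sqrt{\gamma/(1+\gamma)}$. Assumption~\ref{assum:regularity} gives $G\ge c_g>0$ and $M\ge c_m>0$. Lemma~\ref{lem:xi_wellposed} guarantees that $M$ is real. Both statements then reduce to elementary properties of the scalar map $S$ on $(0,\infty)$.

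For (i), hold $G$ and $M$ fixed as the statement prescribes and differentiate. Using $\frac{d}{d\gamma}\frac{\gamma}{1+\gamma} = (1+\gamma)^{-2}$ and the chain rule, we get
\begin{equation*}
\frac{\partial \rho_{\mathrm{critical}}}{\partial \gamma} = \frac{G M}{2}\,\gamma^{-1/2}(1+\gamma)^{-3/2}.
\end{equation*}
This is strictly positive for every $\gamma>0$, because $GM\ge c_g c_m>0$.

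For (ii), the direction ($\Leftarrow$) needs an upper bound on $GM$, since otherwise $S(\gamma)\to 0$ could be offset by $G$ or $M$ blowing up. I will read the corollary as treating $G$ and $M$ as the fixed constants at $t^*$, which is consistent with (i) and with $C$ being called a constant. Then $\rho_{\mathrm{critical}}=GM\,S(\gamma)\le GM\sqrt{\gamma}\to0$.

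For ($\Rightarrow$), I use the lower bound: $\rho_{\mathrm{critical}}\ge c_gc_m S(\gamma)$, so $\rho_{\mathrm{critical}}\to0$ forces $S(\gamma)\to0$. Since $S$ is a continuous strictly increasing bijection from $[0,\infty)$ onto $[0,1)$ with inverse $s\mapsto s^2/(1-s^2)$, this forces $\gamma\to0$. This is the direction where Assumption~\ref{assum:regularity} is genuinely used: without the lower bounds, collapse could come from a vanishing geometric factor instead.

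For the approximation \eqref{eq:collapse_approx}, expand $(1+\gamma)^{-1/2}=1-\gamma/2+O(\gamma^2)$. This gives $\rho_{\mathrm{critical}} = GM\sqrt{\gamma}\,(1+O(\gamma))$, so $C=GM=\frac{\kappa_s(\mathbf{H}_{t^*})}{\sqrt{\operatorname{Tr}(\mathbf{H}_{t^*})}}\sqrt{1+\operatorname{Tr}(\boldsymbol{\Xi}_{t^*})/\operatorname{Tr}(\mathbf{H}_{t^*})}\ge c_gc_m>0$.

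The main obstacle is not computational. It is making precise what varies in (ii), because $G$, $M$ and $\gamma$ all depend on $t^*$ and the data. I will state explicitly that $G$ and $M$ are held fixed, or at least bounded above and below uniformly, and note that only the lower bounds from Assumption~\ref{assum:regularity} are needed for ($\Rightarrow$), while ($\Leftarrow$) needs the additional upper bound on $GM$.
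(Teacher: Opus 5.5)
Your proposal is correct and follows essentially the same route as the paper: you factor $\rho_{\mathrm{critical}} = G M \cdot S(\mathrm{GSNR}_{t^*})$ with $G, M$ treated as fixed positive constants at $t^*$, differentiate $S$ to get $\tfrac{1}{2}\gamma^{-1/2}(1+\gamma)^{-3/2} > 0$, and Taylor-expand $(1+\gamma)^{-1/2}$ to obtain $C = GM$. Two of your points are slightly more careful than the paper, which only invokes continuity of $f$ and its unique zero at $s=0$: your argument for ($\Rightarrow$) via the continuous inverse $s \mapsto s^2/(1-s^2)$, and your observation that ($\Leftarrow$) would need an upper bound on $GM$ if the non-statistical terms were allowed to vary.
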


\begin{proof}
Let $s \triangleq \mathrm{GSNR}_{t^*} \geq 0$ and define the scalar function $f : [0, \infty) \to [0,1)$ by:
\begin{equation}
    f(s) \;\triangleq\; \sqrt{\frac{s}{1+s}}.
\end{equation}
By Theorem~1 and Assumption~\ref{assum:regularity}, there exist constants $c_g' \geq c_g > 0$ and $c_m' \geq c_m > 0$ denoting the actual values of the geometric and misspecification terms, respectively, such that:
\begin{equation}
    \rho_{\mathrm{critical}} = c_g' \cdot c_m' \cdot f(s).
    \label{eq:rho_factored}
\end{equation}

Proof of (i).
Since Assumption~\ref{assum:regularity} treats the geometric and misspecification terms as fixed at the bottleneck step $t^*$, the monotonicity is understood as holding pointwise, i.e., $\rho_{\mathrm{critical}}$ increases whenever $\mathrm{GSNR}_{t^*}$ increases while the remaining terms are held constant. Differentiating $f$ with respect to $s$ via the chain rule yields:
\begin{equation}
\begin{aligned}
    f'(s) 
    &= \frac{d}{ds}\left(\frac{s}{1+s}\right)^{1/2}
     = \frac{1}{2} \left(\frac{s}{1+s}\right)^{-1/2} 
       \cdot \frac{1}{(1+s)^2} \\
    &= \frac{1}{2(1+s)^2} \cdot \sqrt{\frac{1+s}{s}}
     = \frac{1}{2(1+s)\sqrt{s(1+s)}} \;>\; 0.
\end{aligned}
\end{equation}
Since $c_g' > 0$, $c_m' > 0$, and $f'(s) > 0$ for all $s > 0$, it follows from~\eqref{eq:rho_factored} that:
\begin{equation}
    \frac{\partial\, \rho_{\mathrm{critical}}}
         {\partial\, \mathrm{GSNR}_{t^*}} 
    = c_g' \cdot c_m' \cdot f'(s) > 0,
\end{equation}
which establishes~\eqref{eq:monotonicity}.

Proof of (ii).
Define $C \triangleq c_g' \cdot c_m' > 0$. Since $f$ is continuous on $[0,\infty)$ and satisfies $f(s) = 0 \iff s = 0$, $f(s) > 0 \iff s > 0$, it follows from~\eqref{eq:rho_factored} that:
\begin{equation}
    \rho_{\mathrm{critical}} \to 0 
    \;\iff\; f(s) \to 0 
    \;\iff\; s \to 0,
\end{equation}
which establishes the biconditional~\eqref{eq:equivalence}. To characterize the scaling rate near collapse, we expand $(1+s)^{-1/2}$ for $s \ll 1$ via Taylor's theorem:
\begin{equation}
    f(s) 
    = \sqrt{s} \cdot (1+s)^{-1/2}
    = \sqrt{s} \left(1 - \frac{s}{2} + \mathcal{O}(s^2)\right)
    = \sqrt{s} + \mathcal{O}(s^{3/2}).
\end{equation}
Substituting into~\eqref{eq:rho_factored} yields~\eqref{eq:collapse_approx}:
\begin{equation}
    \rho_{\mathrm{critical}} = C \cdot \left(\sqrt{s} + \mathcal{O}(s^{3/2})\right) \approx C \cdot \sqrt{\mathrm{GSNR}_{t^*}}. \qquad \qedhere
\end{equation}
\end{proof}

\begin{remark}
Corollary~\ref{cor:monotonicity} formalizes the design principle of Section~III-G in the main text. Statement~(i) provides the foundational justification: at a given bottleneck step, any increase in $\mathrm{GSNR}_{t^*}$, however marginal, strictly expands the stability boundary. Statement~(ii) further characterizes the collapse dynamics: since $\rho_{\mathrm{critical}} = \mathcal{O}(\sqrt{\mathrm{GSNR}_{t^*}})$ in the critical regime, the COR is acutely sensitive to small perturbations in gradient quality near zero, providing a theoretical account of the abruptness of the collapse observed empirically in Fig.~2. Taken together, \eqref{eq:monotonicity}--\eqref{eq:equivalence} elevate $\mathrm{GSNR}_{t^*}$ from an empirical correlate to a \emph{necessary and sufficient indicator} of Optimization Collapse, under the mild regularity conditions of Assumption~\ref{assum:regularity}.
\end{remark}
\section{Visualization of Region Refinement Mask Across Layers}
\label{sec:appendix_rrm_vis}

Figs.~\ref{fig:rrm_visualization1}, \ref{fig:rrm_visualization2}, and \ref{fig:rrm_visualization3} provide a comprehensive layer-by-layer visualization of the proposed Region Refinement Mask (RRM) mechanism applied to a representative input sample.
The visualization reveals two key observations that validate the effectiveness of RRM.

\paragraph{Effective Filtering of Outlier Tokens.}
The RRM consistently suppresses spurious, semantically irrelevant tokens across all layers.
As shown in the per-region maps, tokens that do not conform to the spatial and semantic structure of each region are progressively masked out, producing increasingly clean and compact region representations.
This confirms that RRM successfully identifies and removes outlier Tokens that would otherwise introduce noise into the feature aggregation process, thereby improving the overall discriminability of the learned representations.

\paragraph{Layer-wise Distribution of Outlier Tokens.}
A closer inspection of the layer-wise progression reveals a non-uniform distribution of outlier Tokens across network depth.
Specifically, a notably higher proportion of tokens are filtered in the \emph{shallow layers} (\textit{Layer~1}--\textit{Layer~4}), where low-level features have not yet been fully contextualized, and in the \emph{final layers} (\textit{Layer~21}--\textit{Layer~24}), where task-specific representations are being consolidated and certain tokens may drift away from the dominant semantic region.
In contrast, the intermediate layers (\textit{Layer~5}--\textit{Layer~20}) exhibit relatively stable region masks, suggesting that the mid-network features are more coherent and require less aggressive filtering. 
This layer-wise pattern is consistent with the COR evolution in Fig.~1(b) of the main text, where intermediate layers exhibit higher COR values, and provides empirical support for the choice of the intermediate fusion layer $l_{\text{mid}}$ in the Hierarchical Representation Integration.

\begin{figure*}[htbp]
    \centering
    \includegraphics[width=0.65\linewidth]{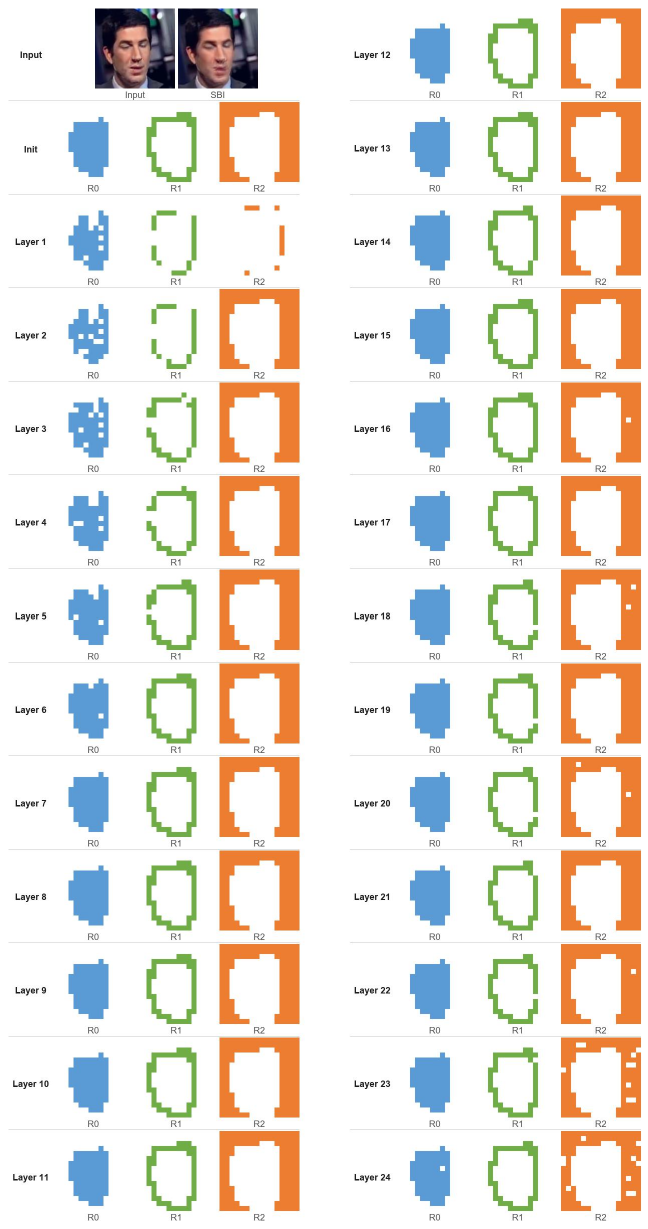}
    \caption{
        Layer-wise visualization of the Region Refinement Mask (RRM) for sample 1. Each row displays the predicted mask components (Region~0, Region~1, and Region~2) at a specific network layer, accompanied by the input image and the SBI baseline for reference. The progression is visualized from the initialization stage (\textit{Init}) through all 24 transformer layers (\textit{Layers 1 to 24}). Within each map, higher intensity areas indicate tokens retained after refinement, whereas darker regions correspond to suppressed outlier tokens.
    }
    \label{fig:rrm_visualization1}
\end{figure*}

\begin{figure*}[htbp]
    \centering
    \includegraphics[width=0.65\linewidth]{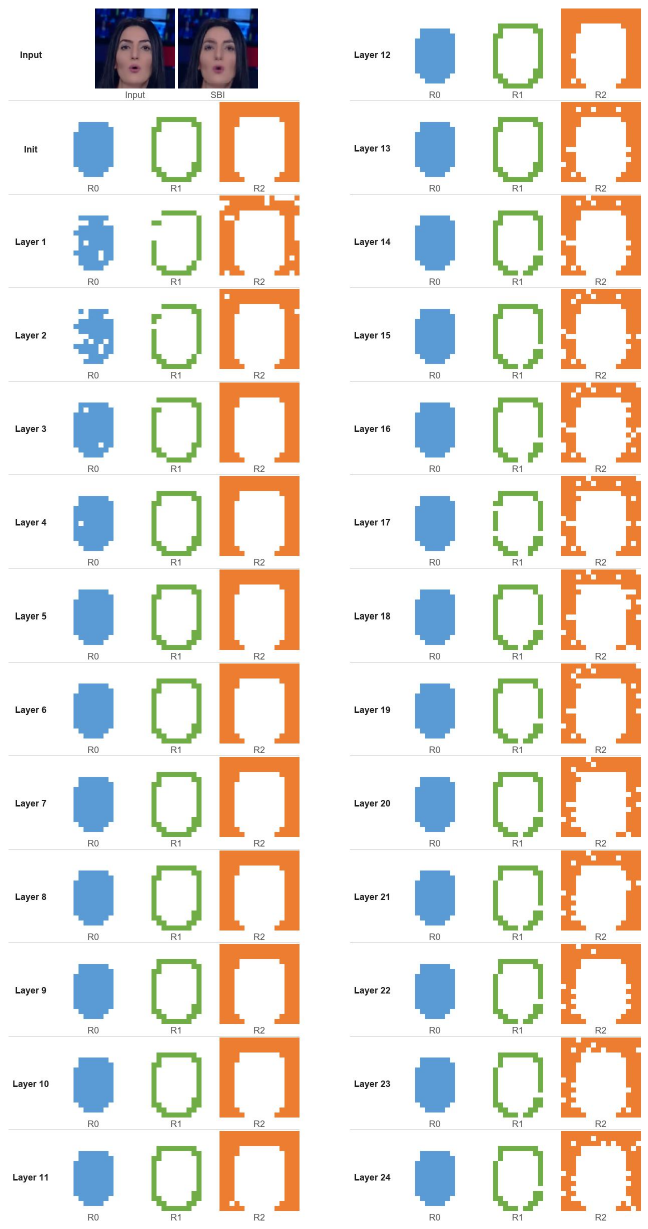}
    \caption{
    Layer-wise visualization of RRM for sample 2. 
    The layout and color coding follow Fig.~\ref{fig:rrm_visualization1}.
}
    \label{fig:rrm_visualization2}
\end{figure*}

\begin{figure*}[htbp]
    \centering
    \includegraphics[width=0.65\linewidth]{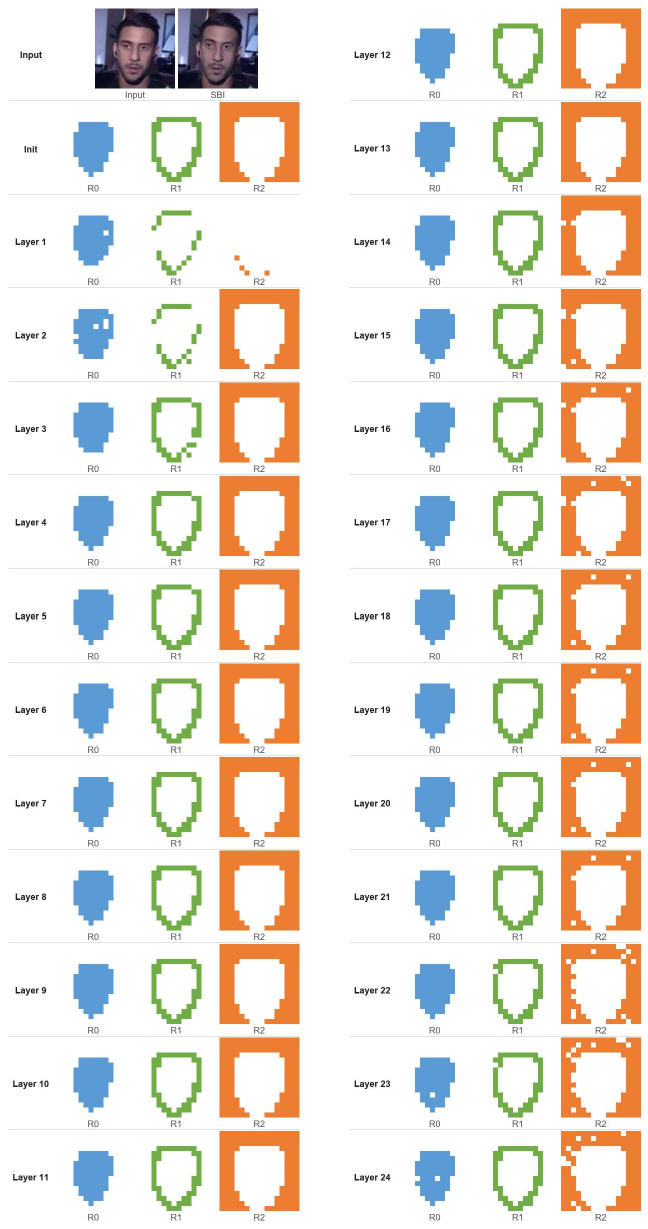}
    \caption{
    Layer-wise visualization of RRM for sample 2. 
    The layout and color coding follow Fig.~\ref{fig:rrm_visualization1}.
}
    \label{fig:rrm_visualization3}
\end{figure*}


\end{document}